\newcommand*{\tikzmk}[1]{\tikz[remember picture,overlay,] \node (#1) {};\ignorespaces}
\newcommand{\boxit}[1]{\tikz[remember picture,overlay]{\node[yshift=3pt,fill=#1,opacity=.25,fit={(A)($(B)+(.975\linewidth,.8\baselineskip)$)}] {};}\ignorespaces}
\newcommand{\BlackBox}{\rule{1.5ex}{1.5ex}}  
\newenvironment{proof}{\par\noindent{\bf Proof\ }}{\hfill\BlackBox\\[2mm]}
\newtheorem{example}{Example}
\newtheorem{theorem}{Theorem}
\newtheorem{lemma}[theorem]{Lemma}
\newtheorem{remark}[theorem]{Remark}
\newcommand\@erelb@r[1]{%
  \mathrel{\tikz[baseline=-.5ex]\draw[#1] (0,0)--(.5,0);}
}
\newcommand{\erelbar}[1]{\@erelbar#1}
\def\@erelbar#1#2{%
  \ifcase\numexpr#1*4+#2\relax
    \@erelb@r{-}\or     
    \@erelb@r{->}\or    
    \@erelb@r{-|}\or    
    \@erelb@r{-o}\or   
    \@erelb@r{<-}\or    
    \@erelb@r{<->}\or   
    \@erelb@r{<-|}\or   
    \@erelb@r{<-o}\or   
    \@erelb@r{|-}\or    
    \@erelb@r{|->}\or   
    \@erelb@r{|-|}\or   
    \@erelb@r{|-o}\or 
    \@erelb@r{o-}\or   
    \@erelb@r{o->}\or  
    \@erelb@r{o-|}\or  
    \@erelb@r{o-o}    
  \else
    \@wrong
  \fi
}
\title{Learning LWF Chain Graphs: A Markov Blanket Discovery Approach}
\author{} 
\author{ {\bf Mohammad Ali Javidian}\\
\And
{\bf Marco Valtorta} \\
Computer Science \& Engineering Department \\
University of South Carolina\\
Columbia, SC 29201 \\
\And
{\bf Pooyan Jamshidi}  \\
}
\begin{document}

\maketitle

\begin{abstract}
This paper provides a graphical characterization of Markov blankets in chain
graphs (CGs) under the Lauritzen-Wermuth-Frydenberg (LWF) interpretation.  The characterization is different from the well-known one for Bayesian networks and generalizes it.  We provide a novel scalable and sound algorithm
for Markov blanket discovery in LWF CGs and prove that the Grow-Shrink algorithm, the IAMB algorithm, and its variants are still correct for Markov blanket discovery in LWF CGs under the same assumptions as for Bayesian networks. We provide a sound and scalable constraint-based framework for learning the structure of LWF CGs from faithful causally sufficient data and prove its correctness when the Markov blanket discovery algorithms in this paper are used. Our proposed algorithms compare positively/competitively against the state-of-the-art  LCD (\textbf{L}earn \textbf{C}hain graphs via \textbf{D}ecomposition) algorithm, depending on the algorithm that is used for Markov blanket discovery. Our proposed algorithms make a broad range of inference/learning problems computationally
tractable and more reliable because they exploit locality.
\end{abstract}

\section{INTRODUCTION}\label{intro}
Probabilistic graphical models are now widely accepted as a powerful and
mature tool for reasoning and decision making under uncertainty.
A \textit{probabilistic graphical model} (PGM) is a compact representation of a joint probability
distribution, from which we can obtain marginal and conditional probabilities \citep{sucar15}. In fact, any PGM consists of two main components: (1) a graph that defines the structure of that model; and (2) a joint distribution over random variables of the model.
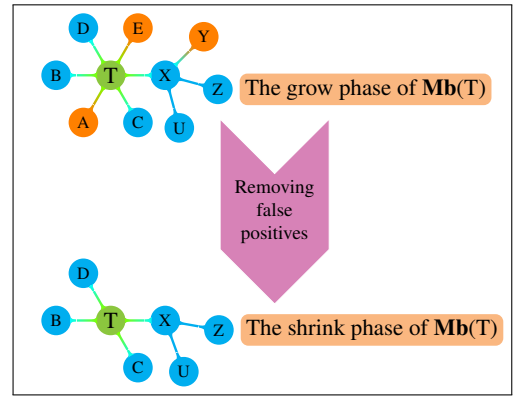
\begin{figure}[!t]
    \centering
    \fbox{
    \tikzset{every concept/.style={minimum size=.5cm, text width=.25cm}, every annotation/.style={concept color=blue,fill=Apricot, text width={}, align=left, font = \large}}
    \tikzset{product size/.style={minimum width=2cm, 
    minimum height=0.5cm,
  },
  product/.style={
    draw,signal, 
    signal to=south, 
    signal from=north,
    product size,
    fill=gray!50!black,
    draw=gray!50!white,
    text=black,
  }, 
}
\scalebox{.725}{%
\begin{tikzpicture}
\path[mindmap,concept color=LimeGreen,text=black]
node[concept] (a) at (0,3) {T}
[clockwise from=0]
child[concept color=cyan,level distance=10mm] {
node[concept] {X}
[clockwise from=45]
child[concept color=orange,level distance=10mm] { node[concept] {Y} }
child[level distance=10mm] { node[concept] (z) {Z} }
child[level distance=10mm] { node[concept] {U} }
}
child[concept color=cyan,level distance=10mm] {node[concept] {C}}
child[concept color=orange,level distance=10mm] { node[concept] {A} }
child[concept color=cyan,level distance=10mm] { node[concept] {B} }
child[concept color=cyan,level distance=10mm] { node[concept] {D} }
child[concept color=orange,level distance=10mm] { node[concept] {E} };
\node [annotation,right] at (z.east)
{The grow phase of \textbf{Mb}(T)};

\node[product, fill=Thistle, draw = white, anchor=west, align=center] (b) at (2,.5) {Removing\\ false\\ positives};

\path[mindmap,concept color=LimeGreen,text=black]
node[concept] at (0,-1.5) {T}
[clockwise from=0]
child[concept color=cyan,level distance=10mm] {
node[concept] {X}
[clockwise from=-10]
child[level distance=10mm] { node[concept] (v) {Z} }
child[level distance=10mm] { node[concept] {U} }
}
child[concept color=cyan,level distance=10mm] {node[concept] {C}}
child[concept color=cyan,level distance=10mm,grow=180] { node[concept] {B} }
child[concept color=cyan,level distance=10mm,grow=120] { node[concept] {D} };
\node [annotation,right] at (v.east)
{The shrink phase of \textbf{Mb}(T)};
\end{tikzpicture}}}
    \caption{The procedure of Markov blanket recovery in the Grow-Shrink based algorithms.}
    \label{fig:GSprocedure}
\end{figure}
Two types of graphical representations of distributions are commonly used, namely, Bayesian networks (BNs) and Markov networks (MNs). Both families encompass the properties of factorization and independencies, but they differ in the set of independencies they can encode and the factorization of the distribution that they induce.

Currently systems containing both causal and non-causal relationships are mostly modeled with \textit{directed acyclic graphs} (DAGs). Chain graphs (CGs) are a type of mixed graphs, admitting both directed and undirected edges, which contain no partially directed cycles. 
So, CGs may contain two types of edges,
the directed type that corresponds to the causal relationship in DAGs and a
second type of edge representing a symmetric relationship \citep{sonntagpena15}. \textit{LWF Chain graphs} were introduced by \textbf{L}auritzen, \textbf{W}ermuth and \textbf{F}rydenberg \citep{f, lw} as a generalization of graphical models based on undirected graphs and DAGs and widely studied \citep{l,lr, d,studeny09,sonntagpena15,roverato05,roverato06}.
From the \textit{causality} point of view, in an LWF CG: Directed edges represent \textit{direct causal effect}s. Undirected edges represents causal effects due to \textit{interference} \citep{Shpitser17,Ogburn18,Bhattacharya19}.

One important and challenging aspect of PGMs is the possibility of learning the structure of models directly
from sampled data. Five \textit{constraint-based} learning algorithms, which use a statistical analysis to test the presence of a
conditional independency, exist for learning LWF CGs: (1) the inductive causation like (IC-like) algorithm \citep{studeny97}, (2) the decomposition-based algorithm called LCD  \citep{mxg}, (3) the answer set programming (ASP) algorithm \citep{sjph}, (4) the inclusion optimal (CKES) algorithm \citep{psn}, and (5) the local structure learning of chain graphs algorithm with false discovery rate control \citep{Wang2019}.

In a DAG $G$ with node set $V$, each local distribution depends only on a single node $v\in V$ and on its parents (i.e., the nodes $u\ne v$ such that $u\erelbar{01}v$, here denoted $pa(v)$). Then the
overall joint density is simply $p(x)=\Pi_{v\in V}^np(x_v|x_{pa(v)})$. The key advantage of the decomposition in this equation is to make \textit{local computations} possible
for most tasks, using just a few variables at a time regardless of the magnitude of $|V|=n$. In Bayesian networks, the concept that enables us to take advantage of local computation is \textit{Markov blanket}. The Markov blanket (\textit{Markov boundary} in Pearl's terminology) of each node $v$, defined as the smallest
set $\textbf{Mb}(v)$ of nodes that separates $v$ from all other nodes $V\setminus \{v,\textbf{Mb}(v)\}$. Markov blankets can be used for variable selection for classification, for causal discovery, and for
Bayesian network learning \citep{Tsamardinos0Mb}. 

Markov blanket discovery has attracted a lot of attention in the context of Bayesian network structure learning (see section \ref{sec:relatedwork}). It is surprising, however,
how little attention (if any) it has attracted in the context of learning LWF chain graphs. 
In this paper, we focus on addressing the problem of Markov blanket discovery for structure learning of LWF chain graphs. For this purpose, we extend the concept of Markov blankets to LWF CGs. We prove that Grow-Shrink Markov Blanket (GSMB) \citep{Margaritis99}, IAMB, and its variants \citep{Tsamardinos0Mb, Yaramakala05} (that are mainly designed for Markov blanket recovery in BNs) are still correct for Markov blanket discovery in LWF CGs under the faithfulness and causal sufficiency assumptions. We propose a new constraint-based Markov blanket recovery algorithm, called MBC-CSP, that is specifically designed for Markov blanket discovery in LWF CGs. 

Since constraint-based learning algorithms are sensitive to \textit{error propagation} \citep{Triantafillou14}, and an erroneous identification of an edge can
propagate through the network and lead to erroneous edge identifications or conflicting orientations \emph{even in seemingly unrelated parts of the network},  the learned chain graph model will be unreliable. In order to address the problem of reliable structure learning, we present a generic approach (i.e., the algorithm is independent of any particular search strategy for Markov blanket discovery) based on Markov blanket recovery to learn the structure of LWF CGs from a faithful data. This algorithm first learns the Markov blanket of each node. This preliminary step greatly simplifies the identification
of neighbours. This in turn results in a significant reduction in the number
of conditional independence tests, and therefore of the overall computational
complexity of the learning algorithm.  
In order to show the effectiveness of this approach,
the resulting algorithms are contrasted against LCD on simulated data. We report experiments showing that our proposed generic algorithm (via 6 different instantiations) provides competitive/better performance against the LCD algorithm in our Gaussian experimental settings, depending on the approach that is used for Markov blanket discovery. Our proposed approach has an advantage over LCD because local structural learning in the
form of Markov blanket is a theoretically well-motivated and
empirically robust learning framework that can serve as a powerful tool in classification and causal discovery \citep{Aliferis10a}.  We also note that Markov blankets are useful in their own right, for example in sensor validation and fault analysis \citep{Sucar1996}. Code for reproducing our results and its corresponding user manual is available at \url{https://github.com/majavid/MbLWF2020}. Our main theoretical and empirical contributions are as follows:

(1) We extend the concept of Markov blankets to LWF CGs and we prove what variables make up the Markov blanket of a target variable in  an  LWF  CG (Section \ref{sec:mbalgs}). 
\newline
(2) We theoretically prove that the Grow-Shrink, IAMB algorithm and its variants are still sound for Markov blanket discovery in LWF chain graphs under the faithfulness and causal sufficiency  assumptions (Section \ref{sec:mbalgs}).
\newline
(3) We present a new algorithm, called MBC-CSP, for learning Markov blankets in LWF chain graphs, and we prove its correctness theoretically (Section \ref{sec:mbalgs}).
\newline
(4) We propose a generic algorithm for structure learning of LWF chain graphs based on the proposed Markov blanket recovery algorithms in Section \ref{sec:mbalgs}, and we prove its correctness theoretically (Section \ref{sec:cglearn}).
\newline
(5) We evaluate the performance of 6 instantiations of the proposed generic algorithm with 6 different Markov blanket recovery algorithms on synthetic Gaussian data, and we show  the competitive performance of our method against the LCD algorithm (Section \ref{sec:eval}).

\section{RELATED WORK}\label{sec:relatedwork}
\textbf{Markov Blanket Recovery for Bayesian Networks with Causal Sufficiency Assumption.}
\cite{Margaritis99} presented the first provably correct algorithm, called Grow-Shrink Markov Blanket (GSMB), that discovers the Markov blanket
of a variable from a faithful data under the causal sufficiency assumption.
Variants of GSMB were proposed to improve speed and reliability such as the Incremental Association Markov Blanket (IAMB) and its variants \citep{Tsamardinos0Mb}, Fast-IAMB \citep{Yaramakala05}, and IAMB with false discovery rate control (IAMB-FDR) \citep{Pena08Mb}. Since in discrete data the sample size required for high-confidence statistical tests of conditional
independence in GSMB and IAMB algorithms grows exponentially in the size of the Markov blanket, several sample-efficient algorithms e.g., HITON-MB \citep{Aliferis10a} and Max–Min Markov Blanket (MMMB) \citep{Tsamardinos2006} were proposed to overcome the data inefficiency of GSMB and IAMB algorithms. One can find alternative computational methods for Markov blanket discovery that were developed in the past two decades in \citep{PENA2007mb,Liu2016,Ling19}, among others.

\noindent\textbf{Markov Blanket Recovery without Causal Sufficiency Assumption.}
Gao and Ji \citep{GaoJi16} proposed the latent Markov blanket learning with constrained structure EM algorithm (LMB-CSEM) to discover the Markov blankets in BNs in the presence of unmeasured confounders. However, LMB-CSEM
was proposed to find the Markov blankets in a DAG and provides no theoretical guarantees for finding all possible unmeasured confounders in the Markov blanket of the target variable. Recently, Yu et. al. \citep{Yu18} proposed a new algorithm, called M3B, to mine Markov blankets in BNs in the presence of unmeasured confounders.

In this paper, we extend the concept of Markov blankets to LWF CGs, which is different from Markov blankets defined in DAGs under the causal sufficiency assumption and also  is different from Markov blankets defined in maximal ancestral graphs without assuming causal sufficiency. So, we need
new algorithms that are specifically designed for Markov blanket discovery
in LWF CGs.

\section{DEFINITIONS AND CONCEPTS}

Below, we briefly list some of the central concepts used in this paper.

A \textit{route} $\omega$ in $G$ is a sequence of nodes (vertices) $v_1,v_2,\dots,v_n, n\ge 1,$ such that $\{v_i,v_{i+1}\}$ is an edge in $G$ for every $1\le i < n$. A \textit{section} of a route is a maximal (w.r.t. set inclusion) non-empty set of nodes $v_i−\cdots−v_j$ s.t. the route $\omega$ contains the subroute $v_i-\cdots -v_j$. It is called a \textit{collider section} if $v_{i-1}\to v_i-\cdots-v_j\gets v_{j+1}$ is a subroute in $\omega$. For any other configuration the section is a non-collider section. 
A \textit{path} is a route containing only distinct nodes. 
A \textit{partially directed path} from $v_1$ to $v_n$ in a graph $G$ is a sequence of $n$ distinct vertices $v_1,v_2,\dots,v_n (n\ge 2)$, such that
\newline
(a) $\forall i (1\le i\le n)$ either $v_i-v_{i+1}$ or $v_i\to v_{i+1}$, and
\newline
(b) $\exists j (1\le j\le n)$ such that $v_j\to v_{j+1}$.
\newline
A partially directed path with $n\ge 3$ and $v_{n}\equiv v_1$ is called a \textit{partially directed cycle}.
If there is a partially directed path from  $a$ to $b$ but not $b$ to $a$, we say that $a$ is an \textit{ancestor} of $b$. The set of ancestors of $b$ is denoted by $an(b)$, and we generalize the definition to a set of nodes in the obvious way.

Formally, we define the set of parents, children, neighbors, and spouses of a variable (node) in an LWF CG $G=(V,E)$ as follows, respectively: $pa(v)=\{u\in V|u\to v \in E\}$, $ch(v)=\{u\in V|v\to u \in E\}$, $ne(v)=\{u\in V|v- u \in E\}$, $sp(v)=\{u\in V|\exists w\in V \textrm{ s.t. }u\to w\gets v \textrm{ in }G\}$. The boundary $bd(A)$ of a subset $A$ of vertices is the set of vertices in $V\setminus A$ that are parents or neighbors to vertices in $A$. The closure of $A$ is $cl(A)=bd(A)\cup A$. If $bd(a)\subseteq A$, for all $a\in A$ we say that $A$ is an \textit{ancestral set}. The smallest ancestral set containing $A$ is denoted by $An(A)$.

An \textit{LWF CG} is a graph in which there are no partially directed cycles. The chain components $\mathcal{T}$ of a CG are the connected components of the undirected
graph obtained by removing all directed edges from the CG. A \textit{minimal complex} (or simply a complex) in a CG is an induced subgraph of the form $a\to v_1-\cdots \cdots-v_r\gets b$. We say that $a$ is a \textit{complex-spouse} of $b$ and vice versa, and that $csp(a)=\{b\in V|\exists\textrm{ a minimal complex of form }a\to x-\cdots- y\gets  b\}$. The \textit{skeleton} of an LWF CG $G$ is obtained from $G$ by changing all directed edges of $G$ into undirected edges. 
For a CG $G$ we define its \textit{moral graph} $G^m$ as the undirected  graph with the same vertex set but with $\alpha$ and $\beta$ adjacent in $G^m$ if and 
only if either $\alpha \to \beta$ , $\beta\to \alpha$, $\alpha - \beta$, or if $\alpha\in csp(\beta)$. 

\emph{Global Markov property for LWF CGs:} 
	For any triple
	$(A, B,S)$ of disjoint subsets of $V$ such that $S$ separates $A$ from $B$
	in $(G_{An(A\cup B\cup S)})^m$, the moral graph of the smallest ancestral set containing $A\cup B\cup S$, indicated as $A {\!\perp\!\!\!\perp}_c B | S$ (read: $S$ \textit{$c$-separates} $A$ from $B$ in the CG $G$), we have $A {\!\perp\!\!\!\perp}_p B | S$, i.e., $A$ is independent of $B$ given $S$. In words, if $S$ $c$-separates $A$ from $B$ in the CG $G$, then  $A$ and $B$ are independent given $S$.
An equivalent path-wise $c$-separation criterion, which generalizes the $d$-separation criterion for DAGs, was introduced in \citep{studeny98}. A route $\omega$ is \textit{active} with respect to a
set $S\subseteq V$ if (i) every collider section of $\omega$ contains a node of $S$ or $an(S)$, and (ii) every node in a non-collider section on the route is not in $S$. A route which is not active with respect to $S$ is \textit{intercepted} (blocked) by $S$. If $G$ is an LWF CG then $X$ and $Y$ are $c$-separated given $S$ iff there exists no active route between $X$ and $Y$.

We say that two LWF chain graphs are \textit{Markov equivalent} if they induce the same conditional independence restrictions.  Two chain graphs are Markov equivalent if and only if they have the same skeletons and the same minimal complexes \citep{f}. Every class of Markov equivalent CGs has a unique CG, called the \textit{largest CG}, with the greatest number of undirected edges \citep{f}. 

The \textit{Markov condition} is said to hold for a DAG $G = (V,E)$ and a probability distribution $P(V)$ if every variable $T$ is statistically independent of its graphical non-descendants (the set of vertices for which there is no directed path from $T$) conditional on its graphical parents
in $P$. Pairs $\langle G,P\rangle$ that satisfy the Markov condition satisfy the following implication:
$\forall X,Y\in V, \forall Z\subseteq V\setminus\{X,Y\}:(X{\!\perp\!\!\!\perp}_{d} Y |Z \Longrightarrow X{\!\perp\!\!\!\perp}_{p} Y |Z)$.
The \textit{faithfulness condition} states that the only conditional independencies to hold are those specified by the Markov condition, formally:
$\forall X,Y\in V, \forall Z\subseteq V\setminus\{X,Y\}: (X{\not\!\perp\!\!\!\perp}_{d} Y |Z \Longrightarrow X{\not\!\perp\!\!\!\perp}_{p} Y |Z)$.

Let a Bayesian network $G = (V,E,P)$ be given. Then,  $V$ is a set of random variables, $(V,E)$ is a DAG, and $P$ is a joint probability
distribution over $V$.  Let $T\in V$. Then the \textit{Markov blanket} $\textbf{Mb}(T)$ is the set of all parents of $T$, children of $T$, and spouses of $T$.  Formally, $\textbf{Mb}(T)=pa(T)\cup ch(T) \cup sp(T)$.

\section{MARKOV BLANKET DISCOVERY IN LWF CHAIN GRAPHS }\label{sec:mbalgs}
Let $G = (V,E,P)$ be an LWF chain graph model. Then, $V$ is a set of random variables, $(V,E)$ is an LWF chain graph, and $P$ is a joint probability
distribution over $V$.  Let $T\in V$. Then the \textit{Markov blanket} $\textbf{Mb}(T)$ is the set of all parents of $T$, children of $T$, neighbors of $T$, and complex-spouses of $T$. Formally, $\textbf{Mb}(T)=bd(T)\cup ch(T)\cup csp(T)$.
We first show that the Markov blanket of the target variable $T$ in an LWF CG probabilistically shields $T$ from the rest of the variables. Under the faithfulness assumption, the Markov blanket is the smallest set with this property. 
Then, we propose a novel algorithm, called MBC-CSP, that is specifically designed for Markov blanket discovery in LWF CGs. In addition, we prove that GSMB, IAMB and its variants, and MBC-CSP are sound for Markov blanket discovery in LWF CGs under the faithfulness and causal sufficiency assumptions.

\begin{theorem}\label{thm:mbcg}
Let $G=(V,E,P)$ be an LWF chain graph model.
Then, $T{\!\perp\!\!\!\perp}_p V\setminus\{T,\textbf{Mb}(T)\}|\textbf{Mb}(T)$.
\end{theorem}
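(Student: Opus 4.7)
The plan is to invoke the global Markov property for LWF CGs and reduce the statement to a separation claim in the moral graph of (the appropriate ancestral closure of) $G$. Concretely, set $A=\{T\}$, $B=V\setminus\{T,\textbf{Mb}(T)\}$, and $S=\textbf{Mb}(T)$. Since $A\cup B\cup S = V$, the smallest ancestral set containing $A\cup B\cup S$ is $V$ itself, so the relevant moralized graph is simply $G^m$. By the global Markov property, it suffices to show that $S$ separates $A$ from $B$ in $G^m$, i.e., every path from $T$ to any vertex in $V\setminus\{T,\textbf{Mb}(T)\}$ in $G^m$ must pass through some node of $\textbf{Mb}(T)$.

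Next I would argue that the neighborhood of $T$ in $G^m$ is exactly $\textbf{Mb}(T)$, using the definition of the moral graph. By definition, a vertex $\alpha$ is adjacent to $T$ in $G^m$ iff one of the following holds in $G$: (i) $\alpha\to T$, (ii) $T\to \alpha$, (iii) $\alpha - T$, or (iv) $\alpha \in csp(T)$. Cases (i) and (iii) together give exactly the boundary $bd(T)$ (parents and undirected neighbors), case (ii) gives $ch(T)$, and case (iv) gives $csp(T)$. Hence the set of $G^m$-neighbors of $T$ is $bd(T)\cup ch(T)\cup csp(T)=\textbf{Mb}(T)$.

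From this the separation claim is immediate: any undirected path in $G^m$ from $T$ to some $v\in V\setminus\{T,\textbf{Mb}(T)\}$ must leave $T$ through a $G^m$-neighbor of $T$, which by the previous paragraph lies in $\textbf{Mb}(T)=S$. Therefore $S$ intercepts every such path, so $A$ is separated from $B$ by $S$ in $G^m$, i.e., $T{\!\perp\!\!\!\perp}_c V\setminus\{T,\textbf{Mb}(T)\}\mid \textbf{Mb}(T)$. Applying the global Markov property for LWF CGs then yields the desired probabilistic independence $T{\!\perp\!\!\!\perp}_p V\setminus\{T,\textbf{Mb}(T)\}\mid \textbf{Mb}(T)$.

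The only real subtlety, and the step I would be most careful with, is the identification of the $G^m$-neighbors of $T$ with $\textbf{Mb}(T)$: one must check that moralization does not introduce additional edges incident to $T$ beyond those coming from complex-spouses. This follows because the moralization rule adds an edge $\alpha - \beta$ only when $\alpha$ and $\beta$ are both endpoints of a minimal complex, which is precisely the $csp$ relation used in the statement; so no extraneous neighbors of $T$ are created. Once this is verified, the rest of the argument is a direct unpacking of definitions followed by a single application of the global Markov property.
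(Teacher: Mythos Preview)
Your proof is correct and takes a genuinely different route from the paper's. The paper argues via Studen\'y's route-based $c$-separation criterion: it fixes an arbitrary $A\notin\{T\}\cup\textbf{Mb}(T)$, considers an arbitrary route from $A$ to $T$, and performs a case analysis on the edge incident to $T$ (and, in the collider-section subcase, on the structure just beyond the child of $T$) to exhibit a blocking node in $\textbf{Mb}(T)$. You instead invoke the moralization formulation of the global Markov property, observe that $An(V)=V$, and reduce everything to the single observation that the $G^m$-neighborhood of $T$ equals $\textbf{Mb}(T)$, which is literally the paper's definition of $G^m$ read off at the vertex $T$. Your argument is shorter and avoids the case analysis entirely; the paper's route-based argument, on the other hand, makes it visible \emph{which} member of $\textbf{Mb}(T)$ intercepts each kind of route (e.g., that $csp(T)$ is exactly what is needed to block routes entering a child of $T$ through a collider section), which is informative but not logically necessary for the theorem. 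Your final caveat about extra moralization edges is already handled by the paper's own definition of $G^m$, which adds edges only for the $csp$ relation, so no additional verification is needed there.
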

\begin{proof}
It is enough to show that for any $A\in V\setminus\{T,\textbf{Mb}(T)\}$, $T{\!\perp\!\!\!\perp}_cA|\textbf{Mb}(T)$. For this purpose, we prove that any route between $A$ and $T$ in $G$ is blocked by $\textbf{Mb}(T)$. In the following cases ($A-{\!\!\!*}B$, where means $A-B$ or $A\to B$ and $A{{*\!\!\!}-{\!\!\!*}}B$ means $A-B$, $A\to B$, or $A\gets B$),
we assume without loss of generality that $T$ cannot appear between $A$ and $B$. (If $T$ appears between $A$ and $B$, the argument for the appropriate case can be applied inductively.)
\newline
\textbf{(1)} The route $\omega$ between $A$ and $T$ is of the form $A{{*\!\!\!}-{\!\!\!*}}\cdots {{*\!\!\!}-{\!\!\!*}}B\to T$. Clearly, $B$ blocks the route $\omega$.
\newline
\textbf{(2)} The route $\omega$ between $A$ and $T$ is of the form $A{{*\!\!\!}-{\!\!\!*}}\cdots {{*\!\!\!}-{\!\!\!*}}B-T$. Clearly, $B$ blocks the route $\omega$.
\newline
\textbf{(3)} The route $\omega$ between $A$ and $T$ is of the form $A{{*\!\!\!}-{\!\!\!*}}\cdots{{*\!\!\!}-{\!\!\!*}} C{{*\!\!\!}-{\!\!\!*}}B\gets T$. We have the following subcases:
\newline
\textit{(3i)} The route $\omega$ between $A$ and $T$ is of the form $A{{*\!\!\!}-{\!\!\!*}}\cdots{{*\!\!\!}-{\!\!\!*}} C\gets B\gets T$. Clearly, $B$ blocks the route $\omega$.
\newline
\textit{(3ii)} The route $\omega$ between $A$ and $T$ is of the form $A{{*\!\!\!}-{\!\!\!*}}\cdots{{*\!\!\!}-{\!\!\!*}} C-B\gets T$. If $B$ is \textit{not} a node on a collider section of $\omega$, $B$ blocks the route $\omega$. However, If $B$ is a node on a collider section of $\omega$, there are nodes $D$ and $E$ ($\ne A,T$) s.t. the route $\omega$ has the form of $A{{*\!\!\!}-{\!\!\!*}}\cdots{{*\!\!\!}-{\!\!\!*}}E\to D-\cdots- C-B\gets T$. $E\in csp(T)$ blocks the route $\omega$.
\newline
\textit{(3iii)} The route $\omega$ between $A$ and $T$ is of the form $A{{*\!\!\!}-{\!\!\!*}}\cdots{{*\!\!\!}-{\!\!\!*}} C\to B\gets T$. $C\in sp(T)$ blocks the route $\omega$.
\newline 
From the global Markov property it follows that every $c$-separation relation in
$G$ implies conditional independence in every joint probability distribution $P$ that satisfies the
global Markov property for $G$. Thus, we have $T{\!\perp\!\!\!\perp}_p V\setminus\{T,\textbf{Mb}(T)\}|\textbf{Mb}(T)$.
\end{proof}
\begin{example}
Suppose $G$ is the LWF CG in Figure \ref{fig:mbex}). $\textbf{Mb}(T)=\{C,F,G,H,K,L\}$, because $pa(T)=\{C,G\}, ch(T)=\{K\}, ne(T)=\{F\}, csp(T)=\{L,H\}$. Note that if only $T$'s adjacents are instantiated, then $T$ is not $c$-separated from $L$ and $H$ in $G$.  
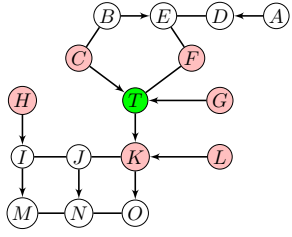
\begin{figure}
    \centering
    \scalebox{.75}{
    \begin{tikzpicture}
\tikzset{vertex/.style = {shape=circle,draw,minimum size=1.5pt,inner sep=1pt}}
\tikzset{edge/.style = {->,> = latex',thick}}


\node[vertex] (a) at (4.5,3.5)  {$A$};
\node[vertex] (b) at (1.5,3.5)  {$B$}; 
\node[vertex,fill=pink] (c) at (1,2.75)  {$C$};
\node[vertex] (d) at (3.5,3.5){$D$};
\node[vertex] (e) at (2.5,3.5){$E$};
\node[vertex,fill=pink] (f) at (3,2.75){$F$}; 
\node[vertex,fill=pink] (g) at (3.5,2){$G$};
\node[vertex,fill=pink] (h) at (0,2){$H$};
\node[vertex] (i) at (0,1){$I$};
\node[vertex] (j) at (1,1){$J$};
\node[vertex,fill=pink] (k) at (2,1){$K$};
\node[vertex,fill=pink] (l) at (3.5,1){$L$};
\node[vertex] (m) at (0,0){$M$};
\node[vertex] (n) at (1,0){$N$};
\node[vertex] (o) at (2,0){$O$};
\node[vertex,fill=green] (t) at (2,2){$T$};


\draw[edge] (a) to (d);
\draw[edge] (b) to (e);
\draw[edge] (c) to (t);
\draw[edge] (g) to (t);
\draw[edge] (t) to (k);
\draw[edge] (k) to (o);
\draw[edge] (l) to (k);
\draw[edge] (j) to (n);
\draw[edge] (i) to (m);
\draw[edge] (h) to (i);

\draw[thick] (d) to (e);
\draw[thick] (e) to (f);
\draw[thick] (b) to (c);
\draw[thick] (f) to (t);
\draw[thick] (i) to (j);
\draw[thick] (j) to (k);
\draw[thick] (m) to (n);
\draw[thick] (n) to (o);
\end{tikzpicture}}
    \caption{The LWF CG $G$. The Markov blanket of the target node $T$ is $\textbf{Mb}(T)=\{C,F,G,H,K,L\}$.}
    \label{fig:mbex}
\end{figure}
\end{example}
\subsection{The MBC-CSP Algorithm for Markov Blanket Discovery in LWF CGs}
The MBC-CSP algorithm is structurally similar to the standard Markov blanket discovery algorithms and follows the same two-phase \textit{grow-shrink} structure as shown in the Figure \ref{fig:GSprocedure}. An estimate of the $\textbf{Mb}(T)$ is kept in the set $\text{\it CMB}$. In the grow phase all variables that
belong in $\textbf{Mb}(T)$ and possibly more (\textit{false positives}) enter
$\text{\it CMB}$ while in the shrink phase the false positives are identified and removed so that $\text{\it CMB} = \textbf{Mb}(T)$ in the end.   

In the grow phase, MBC-CSP first recovers $adj(T):=pa(T)\cup ch(T)\cup ne(T)$, i.e., the variables adjacent to $T$. This step is similar to AdjV algorithm in \citep{Yu18}. Then it discovers complex-spouses of $T$ denoting by $csp(T)$. In the shrink phase, MBC-CSP removes one-by-one the elements of $\text{\it CMB}$ that do not belong to the $\textbf{Mb}(T)$ by
testing whether a feature $X$ from $\text{\it CMB}$ is independent of $T$ given the remaining $\text{\it CMB}$.
\begin{algorithm2e}
    \caption{MBC-CSP: An algorithm for Markov blanket discovery in LWF CGs}\label{alg:mmbcsp}
    \SetAlgoLined
    \SetNoFillComment
    \footnotesize\KwIn{a data set with variable set $V$, target variable $T$, and significance level $\alpha$.}
	\KwOut{$\textrm{\textbf{Mb}}(T)$.}
    \tcc{\textcolor{blue}{\textbf{Phase 1: Grow (Forward)}}}
        
    \tcc{\textcolor{red}{\textbf{step 1:} $adj(T):= pa(T)\cup ch(T)\cup ne(T)$, the set of variables adjacent to $T$.}}
    \tikzmk{A}
    \For{($V_i \in V\setminus\{T\}$)}{
        $p_{V_i}= pvalue(T\perp\!\!\!\perp V_i|\varnothing)$;
        
        \eIf{($p_{V_i} > \alpha$)}{
           $\textbf{Sepset}(T,V_i)= \varnothing$;
           \tcc{$T$ is marginally independent of $V_i$.}
        }{
            Add $V_i$ to $adj(T)$;
        }
    }
    Sort $adj(T)$ in increasing value of $cor(V_i, T )$;
    
    Set $k = 1, \#adj = |adj(T)|$;
    
    \While{($k\le \#adj$)}{
        \For{($V_j\in adj(T)$)}{
            \If{($\exists S\subseteq adj(T)\setminus\{V_j\} s.t. T\perp\!\!\!\perp V_j|S \textrm{ and } |S|=k$)}{
            $adj(T) = adj(T) \setminus V_j$;
            
            $\textbf{Sepset}(T,V_j)= S$;
        }
        }
        $k = k+1,\#adj = |adj(T)|$;
    }\tikzmk{B}\boxit{green!35}
    \tcc{\textcolor{red}{\textbf{step 2:} $csp(T)$, complex-spouses of $T$.}}
    \tikzmk{A}
    \For{$V_i\in adj(T)$}{
        \For{$V_j\in V\setminus\{adj(T),T\}$}{
            $pval_1=pvalue(T\perp\!\!\!\perp V_j|\textbf{Sepset}(T,V_j))$;
            
            $pval_2=pvalue(T\perp\!\!\!\perp V_j|(\textbf{Sepset}(T,V_j))\cup \{V_i\})$;
            
            \If{$pval_1 > \alpha \textrm{ and } pval_2 < \alpha$}{
                Add $V_j$ to $csp(T)$;
            }
        }
    }\tikzmk{B}
    \boxit{pink}
    $\text{\it CMB} =adj(T)\cup csp(T)$;
    
    \tcc{\textcolor{blue}{\textbf{Phase 2: Shrink (Backward)}}}
    \textit{continue = TRUE};
    
    \If{($|\text{\it CMB}| = 0$)}{
        \textit{continue = FALSE};
    }\tikzmk{A}
    \While{(continue)}{
    
        $P_Y = \underset{Y \in \text{\it CMB}}{pvalue}(T\perp\!\!\!\perp Y|\text{\it CMB}\setminus\{Y\})$;
    
    $p.val.max = \max_{Y\in \text{\it CMB}}P_Y$;
    
    $Candidas = \{Y\in \text{\it CMB}|P_Y = p.val.max\}$;
    
        \eIf{($p.val.max > \alpha$)}{
        \tcc{i.e., $T\perp\!\!\!\perp Y|\text{\it CMB}\setminus\{Y\}$}
        
        $\text{\it CMB} = \text{\it CMB}\setminus Candidas[1]$;
        
        \tcc{$Candidas[1]$ means the first element of \textit{Candidas}.}
    }{
        \textit{continue = FALSE};
    }
    }\tikzmk{B}
 \boxit{cyan!60}
    return \textit{\text{\it CMB}};
  \end{algorithm2e}

\textbf{MBC-CSP Description:} In Algorithm \ref{alg:mmbcsp}, $adj(T)$ stores the variables adjacent to $T$, $S$ is the conditioning set, $cor(V_i, T )$ denotes the value of the correlation between $V_i$ and $T$, $\#adj(T)$
is the number of variables in $adj(T)$, and $\textbf{Sepset}(T,V_i)$ means
the separation set for $V_i$ with respect to $T$, i.e., the conditioning set that makes $T$ and $V_i$ conditionally independent.
From line 1 to 8 of Algorithm \ref{alg:mmbcsp}, MBC-CSP removes the
variables that are marginally independent of $T$ and then sorts the remaining
variables in an ascending order of their correlations with $T$. The obtained $adj(T)$ at the end of line 8 may include some false positives. In order to remove false positives from $adj(T)$, we select the variable with the smallest correlation
with $T$, because a variable
with a weak correlation with $T$ may have a higher probability to be removed from $adj(T)$ as a false positive than a variable with a strong correlation with $T$. In this way we speed up the procedure of false positives removal. This procedure begins with a conditioning set of size 1 and then increases the size of the conditioning set one-by-one iteratively until its size is bigger than the size of the current set $adj(T)$.
At each iteration, if a variable is found to be independent of $T$, the variable is removed from the current $adj(T)$ (line 9 to 19 of Algorithm \ref{alg:mmbcsp}). Now, we need to add the complex-spouses of $T$ to the obtained set at the end of line 19. For this purpose, lines 21-29 find the set of $csp(T)$ by checking the following conditions for each $V_j\in V\setminus\{adj(T),T\}$: $T\perp\!\!\!\perp V_j|\textbf{Sepset}(T,V_j)$ and $T\not\perp\!\!\!\perp V_j|(\textbf{Sepset}(T,V_j))\cup \{V_i\})$. According to the global Markov property for LWF CGs, these two conditions together guarantee that $V_j\in csp(T)$. At the end of line 30, we obtain a candidate set for the Markov blanket of $T$ that may contain some false positives. The phase 2 of Algorithm \ref{alg:mmbcsp} i.e., lines 35-44, uses the same idea of the shrinking phase of Markov blanket discovery algorithm IAMB \citep{Tsamardinos0Mb} for the output of phase 1 to reduce the number of false positives in the output of the algorithm. For this purpose, we remove one-by-one the variables that do not belong to the $\textbf{Mb}(T)$ by
testing whether a variable $Y$ from $\text{\it CMB}$ is independent of $T$ given the remaining variables in $\text{\it CMB}$.

\begin{remark}
For the adjacency recovery phase of Algorithm \ref{alg:mmbcsp} (line 1-19), one can use the HITON-PC or MMPC \citep{Aliferis10a} algorithms, especially in cases where a sample-efficient  algorithm is needed.
\end{remark}

\begin{theorem}\label{thm:Mbalgs}
Given the Markov assumption, the faithfulness assumption, a graphical model represented by an LWF CG, and i.i.d. sampling, in the large sample limit, the Markov blanket recovery algorithms GS \citep{Margaritis99}, IAMB \citep{Tsamardinos0Mb}, MMBC-CSP (Algorithm \ref{alg:mmbcsp}), fastIAMB \citep{Yaramakala05}, Interleaved Incremental Association (interIAMB) \citep{Tsamardinos0Mb}, and fdrIAMB \citep{Pena08Mb} correctly identify all Markov blankets for each variable.  (Note that \emph{Causal Sufficiency} is assumed i.e., all causes of more than one variable are observed.)
\end{theorem}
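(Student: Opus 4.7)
The plan is to reduce correctness of each listed algorithm to two structural properties of the LWF Markov blanket: (I) $\textbf{Mb}(T)$ \emph{shields} $T$, i.e., $T{\!\perp\!\!\!\perp}_p V\setminus\{T,\textbf{Mb}(T)\}\mid \textbf{Mb}(T)$, and (II) $\textbf{Mb}(T)$ is \emph{minimal} under this property, so under faithfulness it is the unique Markov boundary. Property (I) is exactly Theorem~\ref{thm:mbcg}. The bulk of the new work is proving (II) and then observing that the BN correctness proofs of GS, IAMB and its variants appeal only to (I), (II), and the graphoid axioms, none of which is specific to the DAG case.

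To establish (II) I would fix any $X\in\textbf{Mb}(T)=bd(T)\cup ch(T)\cup csp(T)$ and exhibit a route between $T$ and $X$ that is active with respect to $S:=\textbf{Mb}(T)\setminus\{X\}$; faithfulness then gives $T\not{\!\perp\!\!\!\perp}_p X\mid S$, so no proper subset of $\textbf{Mb}(T)$ shields $T$. For $X\in pa(T)\cup ne(T)$ the edge $X\to T$ or $X-T$ is itself a one-edge active route. For $X\in ch(T)$ the edge $T\to X$ suffices similarly. The main technical point is $X\in csp(T)$: there is a minimal complex $X\to v_1-\cdots-v_r\gets T$, giving the route $T\to v_r-\cdots-v_1\gets X$ whose unique collider section $\{v_1,\ldots,v_r\}$ contains $v_r\in ch(T)\subseteq S$, activating it, while the only non-collider sections are the singleton endpoints $\{T\}$ and $\{X\}$, neither of which lies in $S$. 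This complex-spouse case is the step I expect to need the most care, since one must also argue that no other node of the complex is forced into $S$ in a way that would deactivate the route.

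With (I) and (II) in hand, uniqueness of the Markov boundary follows from the intersection property satisfied by any faithful (hence strictly positive under standard assumptions) distribution. For GS, IAMB, fastIAMB, interIAMB and fdrIAMB, the grow phase monotonically enlarges \textit{CMB} until it c-blocks $T$ from every remaining variable (eventually a superset of $\textbf{Mb}(T)$), and the shrink phase removes every $Y$ with $T{\!\perp\!\!\!\perp}_p Y\mid \textit{CMB}\setminus\{Y\}$; by (II) no member of $\textbf{Mb}(T)$ is ever removed, while every spurious $Y$ eventually is. These arguments transcribe verbatim the original BN correctness proofs, substituting (I) and (II) for their DAG analogues, plus the test-ordering or FDR tweaks used by each variant, which are irrelevant in the large-sample limit.

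For MBC-CSP I would verify its three sub-phases directly against Algorithm~\ref{alg:mmbcsp}. Lines 1--19 mirror the PC-style adjacency search: under faithfulness, $V_i$ survives as a candidate in $adj(T)$ iff no $S\subseteq V\setminus\{T,V_i\}$ c-separates $T$ and $V_i$, which by the standard CG skeleton-recovery argument is equivalent to $V_i\in bd(T)\cup ch(T)\cup ne(T)$. Lines 20--29 declare $V_j$ a complex-spouse exactly when some $V_i\in adj(T)$ is a witness of the pattern $T{\!\perp\!\!\!\perp}_p V_j\mid\textbf{Sepset}(T,V_j)$ but $T\not{\!\perp\!\!\!\perp}_p V_j\mid\textbf{Sepset}(T,V_j)\cup\{V_i\}$; by the global Markov property for LWF CGs and faithfulness this is the defining signature of a complex section terminating at $V_j$, so precisely $csp(T)$ is recovered. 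Finally, lines 30--44 are the IAMB shrink step whose correctness now follows from (I), (II), and the same argument used above for the generic IAMB family, completing the proof.
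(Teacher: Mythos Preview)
Your proposal is correct and takes essentially the same approach as the paper's sketch: both reduce correctness to the shielding property of Theorem~\ref{thm:mbcg} together with minimality of $\textbf{Mb}(T)$, and then run the standard grow--shrink argument (every member of $\textbf{Mb}(T)$ enters in the grow phase and is never removed in the shrink phase, while every non-member is eventually pruned). You supply considerably more detail than the paper---an explicit route-activation case analysis establishing minimality (II) and a separate phase-by-phase verification for MBC-CSP---whereas the paper's brief sketch simply asserts that $\textbf{Mb}(T)$ is minimal and treats all six algorithms uniformly.
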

\begin{proof}[Sketch of proof]
If a variable belongs to $\textbf{Mb}(T)$, then it will be admitted in the first step (Grow phase) at some point, since it will be dependent on $T$ given the candidate set of  $\textbf{Mb}(T)$.  This holds because of the faithfulness and because the set $\textbf{Mb}(T)$ is the minimal set with that property. If $X\not\in \textbf{Mb}(T)$, then conditioned on $\textbf{Mb}(T)\setminus\{X\}$, it will be independent of $T$ and thus will
be removed from the candidate set of $\textbf{Mb}(T)$ in the second phase (Shrink phase) because the Markov condition entails that independencies in the distribution are represented in the graph. Since the faithfulness condition entails dependencies in the distribution from the graph, we never remove any variable $X$ from the candidate set of $\textbf{Mb}(T)$ if $X\in\textbf{Mb}(T)$.  Using this
argument inductively we will end up with the $\textbf{Mb}(T)$.
\end{proof}

\section{LEARNING LWF CGs VIA MARKOV BLANKETS}\label{sec:cglearn}
Any sound algorithm for learning Markov blankets of LWF CGs can be employed and extended to a full  LWF CG learning algorithm, as originally suggested in \citep{Margaritis99} for Grow-Shrink Markov blanket algorithm (for Bayesian networks). Thanks to the proposed Markov blanket discovery algorithms listed in Theorem \ref{thm:Mbalgs}, we can now present a generic algorithm for learning LWF CGs. Algorithm \ref{alg:LWFMb} lists pseudocode for the three main phases of this approach.
\begin{algorithm2e}[t]
	\caption{MbLWF: An algorithm for learning LWF CGs via Markov blanket discovery}\label{alg:LWFMb}
	\SetAlgoLined
	\SetNoFillComment
	\small\KwIn{a set $V$ of nodes and a probability distribution $p$ faithful to an unknown LWF chain graph $G=(V,E)$.}
	\KwOut{The pattern of $G$.}
	\tcc{\textcolor{blue}{\textbf{Phase 1: Learning Markov blankets}}}
	For each variable $X_i\in V$, learn its Markov blanket $\textbf{Mb}(X_i)$\;
	Check whether the Markov blankets are symmetric, e.g., $X_i\in \textbf{Mb}(X_j) \leftrightarrow X_j\in \textbf{Mb}(X_i)$. Assume that nodes for whom symmetry does not hold are false positives and drop them from each other's Markov blankets\;
	Set $\textbf{Sepset}(X_i,X_j)=\textbf{Sepset}(X_j,X_i)$ to the smallest of $\textbf{Mb}(X_i)$ and $\textbf{Mb}(X_j)$ if $X_i\not\in \textbf{Mb}(X_j)$ and $X_j\not\in\textbf{Mb}(X_i)$\;
	\tcc{\textcolor{blue}{\textbf{Phase 2: Skeleton Recovery}}}
	Construct the undirected graph $H=(V,E)$, where $E=\{X_i -X_j| X_j\in\textbf{Mb}(X_i) \textrm{ and }X_i\in\textbf{Mb}(X_j)\}$;
	
	\For{$i\gets 0$ \KwTo $|V_H|-2$}{
		\While{possible}{
			Select any ordered pair of nodes $u$ and $v$ in $H$ such that $u\in ad_H(v)$ and $|ad_H(u)\setminus v|\ge i$\;
			\tcc{$ad_H(x):=\{y\in V| x-y \in E\}$}
			\If{\textrm{there exists $S\subseteq (ad_H(u)\setminus v)$ s.t. $|S|=i$ and $u\perp\!\!\!\perp_p v|S$ (i.e., $u$ is independent of $v$ given $S$ in the probability distribution $p$)}}{
				Set $S_{uv} = S_{vu} = S$\;
				Remove the edge $u - v$ from $H$\;
			}
		}
	}
	\tcc{\textcolor{blue}{\textbf{Phase 3: Complex Recovery \citep{mxg}}}}
	Initialize $H^*=H$\;
	\For{\textrm{each vertex pair $\{u,v\}$ s.t. $u$ and $v$ are not adjacent in $H$}}{
		\For{\textrm{each $u- w$ in $H^*$}}{
			\If{$u\not\perp\!\!\!\perp_p v|(S_{uv}\cup \{w\})$}{
				Orient $u-w$ as $u\erelbar{01} w$ in $H^*$\;
			}
		}
	}
	Take the pattern of $H^*$\;
\end{algorithm2e}

\textbf{Phase 1: Learning Markov blankets:} This phase consists of learning the Markov blanket of each variable with feature selection to reduce the number of candidate structures early on. Any algorithm in Theorem \ref{thm:Mbalgs} can be plugged in Step 1.  Once all Markov blankets have been learned, they are checked
for consistency (Step 2) using their symmetry; by definition $X_i\in \textbf{Mb}(X_j) \leftrightarrow X_j\in \textbf{Mb}(X_i)$.
Asymmetries are corrected by treating them as false positives and removing those
variables from each other’s Markov blankets. At the end of this phase, separator sets of $X$ and $Y$ set to the smallest of $\textbf{Mb}(X)$ and $\textbf{Mb}(Y)$ if $X\not\in \textbf{Mb}(Y)$ and $Y\not\in\textbf{Mb}(X)$.

\textbf{Phase 2: Skeleton Recovery:} First, we construct the moral graph of the LWF CG $G$ that is an undirected graph in which each node of the original $G$ is now connected to its Markov blanket (line 4 of Algorithm \ref{alg:LWFMb}). Lines 5-13 learn the \textit{skeleton} of the LWF CG by removing the spurious edges. In fact, we remove the added undirected edge(s) between each variable $T$ and its complex-spouses due to the fact that $csp(T)\subseteq \textbf{Mb}(T)$. Separation sets are updated correspondingly.

\textbf{Phase 3:} \textbf{Complex Recovery:}
We use an approach similar to the proposed algorithm by \citep{mxg} for complex recovery. To get the pattern of $H^*$ in line 22, at each step, we consider a pair of candidate complex arrows $u_1 \to w_1$ and $u_2\to w_2$ with $u_1 \ne u_2$, then we check whether there is an undirected path from $w_1$ to $w_2$ such that none of its intermediate vertices is adjacent to either $u_1$ or $u_2$. If there exists such a path, then $u_1 \to w_1$ and $u_2\to w_2$ are labeled (as complex arrows). We repeat this procedure until all possible candidate pairs are examined. The pattern is then obtained by removing directions of all unlabeled as complex arrows in $H^*$ \citep{mxg}. Note that one can use three basic
rules, namely the \textit{transitivity rule}, the \textit{necessity rule}, and the \textit{double-cycle rule}, for changing the obtained pattern in the previous phase into the corresponding largest CG (see \cite{studeny97} for details).

\textbf{Computational Complexity Analysis of Algorithm \ref{alg:LWFMb}} Assume that the ``learning Markov blankets" phase uses the grow-shrink (GSMB) approach and $n=|V|$, $m=|E|$, where $G=(V,E)$ is the true LWF CG.  Since the Markov blanket algorithm involves $O(n)$ conditional independence (CI) tests, Phase 1 (learning Markov blankets) involves $O(n^2)$ tests. If $b = max_X |\textrm{\textbf{Mb}}(X)|$, the skeleton recovery (line 5-13) does $O(nb2^b)$ CI tests. In the worst case, i.e. when $b = O(n)$ and $m = O(n^2)$ i.e. the original graph is dense, the total complexity for these 2 phases becomes $O(n^2+nb2^b)$ or $O(n^2 2^n)$. Under the assumption that $b$ is bounded by a constant (the sparseness assumption), the complexity of Phase 1 and 2 together is $O(n^2)$ in the number of CI tests. As claimed in \citep{mxg}, the total complexity of Phase 3 (complex recovery, lines 14-22) is $O(mn)$ in the number of CI tests. The total number of CI tests for the entire algorithm is therefore $O(n^2+nb2^b+mn)$. Under the assumption that $b$ is bounded by a constant, this algorithm is $O(n^2+mn)$ in the number of CI tests. 

\section{Experimental Evaluation}\label{sec:eval}
We performed a large set of experiments on simulated data for contrasting: (1) our proposed Markov blanket discovery algorithm, MBC-CSP, against GS, IAMB, fastIAMB, interIAMB, and fdrIAMB for Markov blanket recovery only, due to their important role in causal discovery and classification; and (2) our proposed structure learning algorithms (GSLWF, IAMBLWF, interIAMBLWF, fastIAMBLWF, fdrIAMBLWF,  and MBCCSPLWF) against the state-of-the-art algorithm LCD for LWF CG recovery. 
We implemented all algorithms in R by extending code from the $\mathsf{bnlearn}$ \citep{Scutarijstatsoft09} and $\mathsf{pcalg}$ \citep{JSSpcalg} packages to LWF CGs.
We run our algorithms and the LCD algorithm on randomly generated LWF CGs and we compare the results and report summary error measures.

\textbf{Experimental Settings:}
Let $N=2$ or 3 denote the average degree of edges (including undirected, pointing out, and pointing in) for each vertex. We generated random LWF CGs with 30, 40, or 50 variables and $N=2$ or 3, as described in \citep{mxg} (see Appendix B for details). Then, we generated Gaussian distributions of size 200 and 2000 on the resulting LWF CGs via the $\mathrm{rnorm.cg}$ function from the \href{http://www2.uaem.mx/r-mirror/web/packages/lcd/lcd.pdf}{LCD} R package, respectively.
For each sample, two different
significance levels $(\alpha = 0.05, 0.005)$ are used to perform the hypothesis tests. The \textit{null hypothesis} $H_0$ is ``two variables $u$ and $v$ are conditionally independent given a set $C$ of variables" and alternative $H_1$ is that $H_0$ may not hold. We then
compare the results to access the influence of the significance testing level on the performance of our algorithms.

\textbf{Metrics for Evaluation:} 
We evaluate the performance of the proposed algorithms in terms of the six measurements that are commonly used~\citep{Colombo2014,Tsamardinos2006} for constraint-based algorithms: (a) the true positive
rate (TPR) (also known as recall), (b) the false positive rate (FPR), (c) the true discovery rate (TDR) (also known as precision), (d) accuracy (ACC) for the skeleton, (e) the Structural Hamming Distance (SHD), and (f) run-time. In principle, large values of TPR, TDR, and ACC, and small values of FPR and SHD indicate good performance.  

\begin{figure}
	\centering
	\includegraphics[scale=.474,page=1]{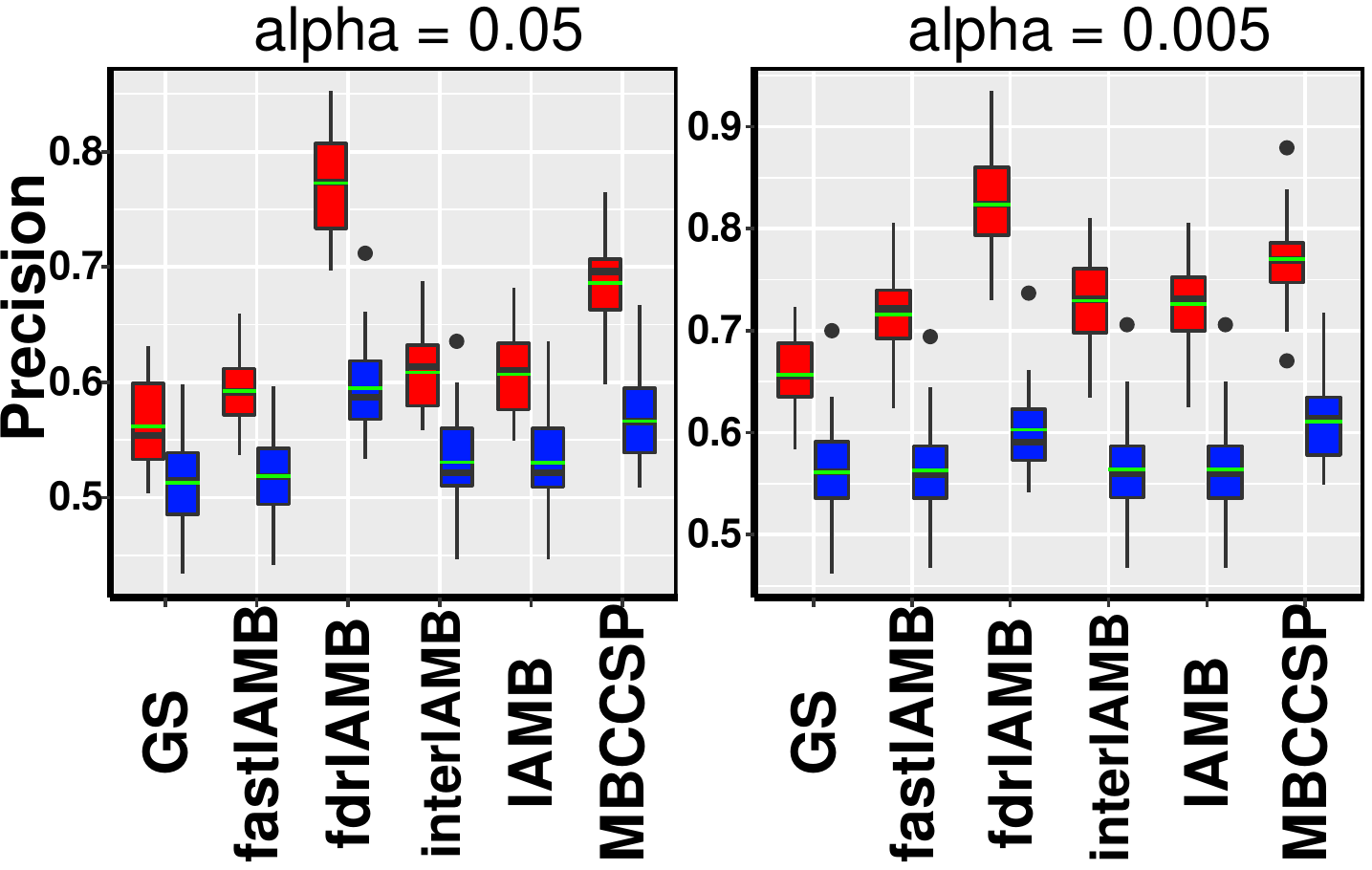}
	\includegraphics[scale=.474,page=2]{images/Mbsboxplots503.pdf}
	\includegraphics[scale=.474,page=3]{images/Mbsboxplots503.pdf}
	\includegraphics[scale=.474,page=4]{images/Mbsboxplots503.pdf}
	\includegraphics[scale=.474,page=5]{images/Mbsboxplots503.pdf}
	\caption{Performance of Markov blanket recovery algorithms for randomly generated Gaussian chain graph models:
		 over 30 repetitions with 50 variables  correspond to N = 3. The green line in a box indicates the mean of that group.}
	\label{fig:503Mbs}
\end{figure}

\begin{figure}
	\centering
	\includegraphics[scale=.474,page=1]{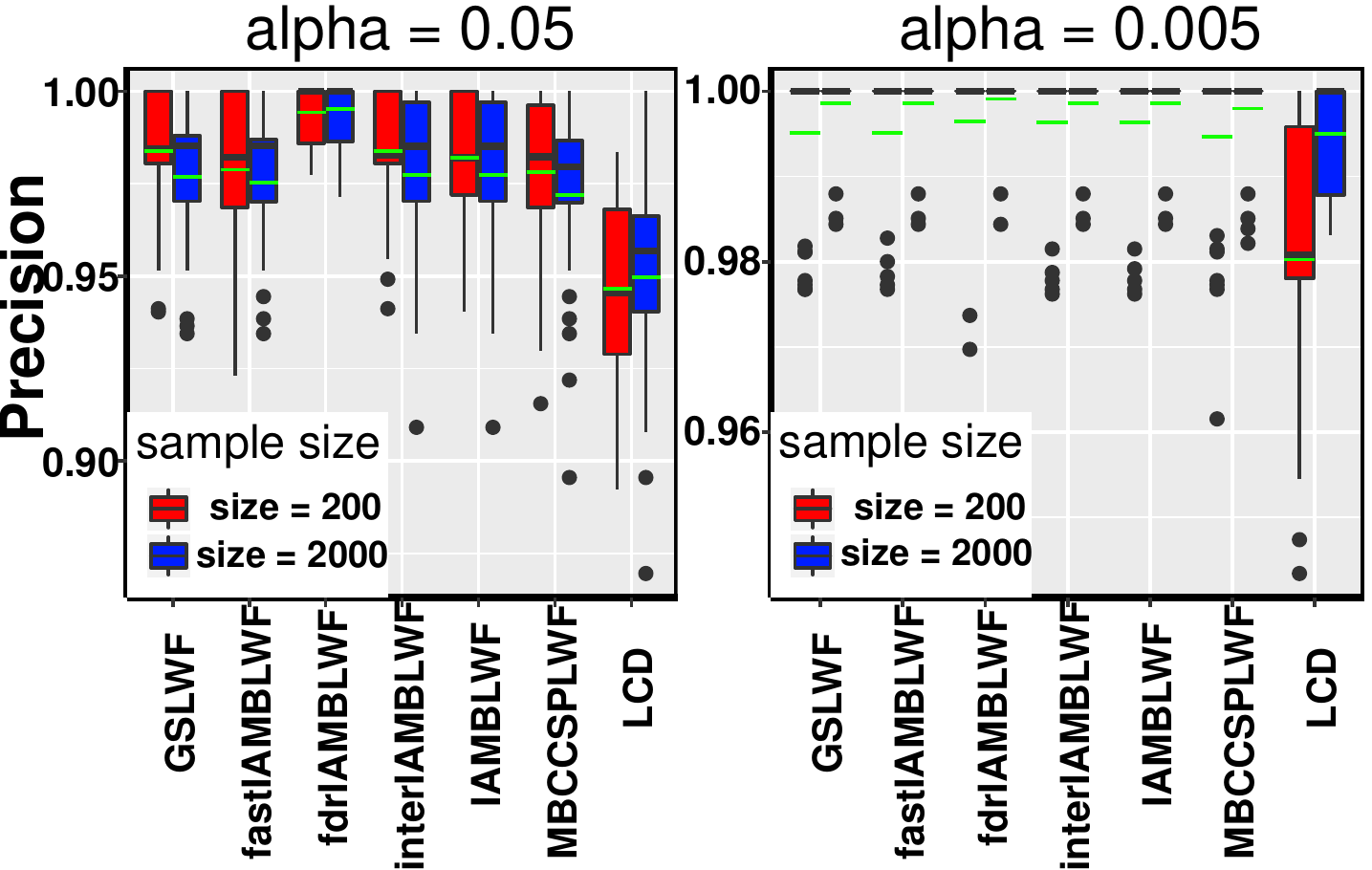}
	\includegraphics[scale=.474,page=2]{images/MBLWFCG503.pdf}
	\includegraphics[scale=.474,page=3]{images/MBLWFCG503.pdf}
	\includegraphics[scale=.474,page=4]{images/MBLWFCG503.pdf}
	\includegraphics[scale=.474,page=5]{images/MBLWFCG503.pdf}
	\caption{Performance of LCD and MbLWF algorithms for randomly generated Gaussian chain graph models:
		 over 30 repetitions with 50 variables  correspond to N = 3. The green line in a box indicates the mean of that group.}
	\label{fig:503cglearn}
\end{figure}
\subsection{Results and their Implications}
Our experimental results for LWF CGs with 50 variables are partially (only for a few configurations of parameters) shown in Figures \ref{fig:503Mbs} and \ref{fig:503cglearn}. The other results are in Appendix B. We did not test
whether the faithfulness assumption holds for any of the networks, thus the results are indicative of the performance of the algorithms on arbitrary LWF CGs.

\textbf{Some highlights for Markov blanket discovery:}
(1) As shown in our experimental results, our proposed Markov blanket discovery algorithm, MBC-CSP, is as good as or even (slightly) better than others in many settings. 
(2) As expected, the recall of all algorithms increases with an increase in sample size.  Surprisingly, however, the other error measures worsen with an increase in sample size. A possible explanation could be that the correlation test is too aggressive and rejects variables that are in fact related in the ground truth model. 
(3) The significance level (p-value or $\alpha$ parameter) has a notable impact on the performance of algorithms. Except for precision, the lower the significance level, the better the performance.
(4) The fdrIAMB algorithm has the best precision, FPR, and ACC in small sample size settings, which is consistent with previously reported results~\citep{Pena08Mb}. This comes at the expense, however, of much worse recall.

\textbf{Some highlights for LWF CGs recovery:}
(1) As shown in our experimental results, our proposed Markov blanket based algorithm, MbLWF, is  as good as or even (slightly) better than LCD in many settings. The reason is that both LCD and MbLWF algorithms take advantage of local computations that make them equally robust against the choice of learning parameters.  (2) While our Markov blanket based algorithms have better precision and FPR, the LCD algorithm enjoys (slightly) better recall. The reason for this may be that the faithfulness
assumption makes the LCD algorithm search for a CG that represents all the independencies that are detected in the sample set. However, such a
CG may also represent many other independencies. Therefore, the LCD algorithm trades precision for recall. 
In other words, it seems that the faithfulness assumption makes the LCD algorithm overconfident
and aggressive, whereas under this
assumption MbLWF algorithms are more cautious, conservative, and more importantly more precise than the LCD algorithm. (3) Except for the fdrIAMB algorithm in small sample size, there is no meaningful difference among the performance of algorithms based on ACC. (4) The best SHD belongs to MBC-CSPLWF and LCD in small sample size settings, and to MBC-CSPLWF, fdrIAMB, and LCD in large sample size settings. (5) Constraint-based learning algorithms always have been criticized for their relatively high structural-error rate \citep{Triantafillou14}. However, as shown in our experimental results, the proposed Markov blanket based approach is, overall,  as good as or even better than the state-of the-art algorithm, i.e., LCD. One of the most important implications of this work is that there is much room for improvement to the constraint-based algorithms in general and Markov blanket based learning algorithms in particular, and hopefully this work will inspire other researchers to address this important class of algorithms. (6) Markov blankets of different variables can be learned independently from each other, and later merged and reconciled to produce a coherent LWF CG. This allows the parallel implementations for scaling up the task of learning chain graphs from data containing more than hundreds of variables, which is crucial for big data analysis tools. In fact, our proposed structure learning algorithms can be parallelized following \citep{scutari17}; see supplementary material for a detailed example. 

With the use of our generic algorithm (Algorithm \ref{alg:LWFMb}), the problem of structure learning is reduced to finding an efficient algorithm for Markov blanket discovery in LWF CGs. This greatly simplifies the structure-learning task and makes a wide range of inference/learning problems computationally tractable because they exploit locality. In fact, due to the causal interpretation of LWF CGs \citep{rs,Bhattacharya19}, discovery of Markov blankets in LWF CGs is significant because it can play an important role for estimating causal effects under unit dependence induced by a network represented by a CG model, when there
is uncertainty about the network structure. 

\section{DISCUSSION AND CONCLUSION}
An important novelty of local methods in general and Markov blanket recovery algorithms in particular for structure learning  is circumventing non-uniform graph connectivity. A chain graph may be non-uniformly dense/sparse. In a global learning framework, if a region is particularly dense, that region cannot be discovered quickly and many errors will result when  learning with a small sample. These errors propagate to remote regions in the chain graph including
those that are learnable accurately and fast with local methods. In contrast, local methods such as Markov blanket discovery algorithms are fast and accurate in the less dense regions.
In addition, when the dataset has tens or hundreds of thousands of variables, applying global discovery
algorithms that learn the full chain graph becomes impractical. In those cases,
Markov blanket based approaches that take advantage of local computations can be used for learning full LWF CGs. For this purpose, we extended the concept of Markov blankets to LWF CGs and we proposed a new algorithm, called MBC-CSP, for Markov blanket discovery in LWF CGs. We proved that GSMB and IAMB and its variants are still sound for Markov blanket discovery in LWF CGs under the faithfulness and causal sufficiency assumptions. This, in turn, enabled us to extend these algorithms to a new family of global structure learning algorithms based on Markov blanket discovery. As we have shown for the MBC-CSP algorithm, having an effective strategy for Markov blanket recovery in LWF CGs improves the quality of the learned Markov blankets, and consequently the learned LWF CG. 

As noticed by \cite{LiWang09}, the choice of which
performance parameter to optimize (equivalently, which error parameter to control) depends on the application, so
we reported on several performance parameters in our experiments. We plan to address the multiple hypotheses testing problem in the small sample case in future work.
An approach based on the theoretical work in \citep{Benjamini2001} that uses explicit control of error rates
was attempted and carried out in \citep{Wang2019}.

Another interesting direction for future work is answering the following question: Can we relax the faithfulness assumption and
develop a correct, scalable, and data efficient algorithm for learning Markov blankets in LWF CGs?

\subsubsection*{Acknowledgements}
This work has been supported by AFRL and DARPA (FA8750-16-2-0042).  This work is also partially supported by an ASPIRE grant from the Office of the Vice President for Research at the University of South Carolina. We appreciate the comments, suggestions, and questions from all anonymous reviewers and thank them for the careful reading of our paper.  
\bibliographystyle{aaai}
\bibliography{sample}

\newpage
\appendix
\section*{Appendix A: Correctness of Algorithm \ref{alg:LWFMb}}\label{appendixA}

We prove the correctness of the Algorithm \ref{alg:LWFMb} with following lemmas.
\begin{lemma}\label{lem1lwfpc}
After line 13 of Algorithm \ref{alg:LWFMb}, $G$ and $H$ have the
same adjacencies.
\end{lemma}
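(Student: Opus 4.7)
The strategy is a PC-style argument adapted to LWF CGs: first observe that after line~4 the graph $H$ already contains every true adjacency of $G$ (plus only the extra moralization edges), then show by faithfulness that true adjacencies are never removed in the loop, and finally show that every spurious edge is removed once the conditioning-set size reaches the size of some canonical c-separating set.

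First, by Theorem~\ref{thm:Mbalgs} the Markov blankets computed in Phase~1 are correct, so the symmetry correction in Step~2 makes no changes and the edge set built in line~4 is exactly $\{X_i-X_j : X_j\in\textbf{Mb}(X_i)\}$. Since whenever $X_i$ and $X_j$ are adjacent in $G$ we have $X_j\in bd(X_i)\cup ch(X_i)\subseteq \textbf{Mb}(X_i)$ and symmetrically $X_i\in\textbf{Mb}(X_j)$, every $G$-adjacency appears as an edge of $H$. The only additional edges of $H$ are those arising from complex-spouse pairs, which are precisely the moralization edges; in other words, $H$ at this point coincides with the moral graph $G^m$.

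Next, if $u$ and $v$ are adjacent in $G$ then no subset $S\subseteq V\setminus\{u,v\}$ can c-separate them (every route containing the direct edge $u-v$ or $u\to v$ is active), so under faithfulness the test in line~8 never succeeds on the pair $(u,v)$ and the edge is preserved throughout. As a consequence, the invariant $ad_G(u)\subseteq ad_H(u)$ holds at every iteration of the outer loop.

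For the converse direction, suppose $u$ and $v$ are non-adjacent in $G$ but connected by an edge in $H$; equivalently $u$ and $v$ are complex-spouses with no direct edge. Using the global Markov property and moralization, one shows that there exists a c-separating set $S^*\subseteq ad_G(u)\setminus\{v\}$ (or $S^*\subseteq ad_G(v)\setminus\{u\}$). By the invariant of the previous paragraph, $S^*\subseteq ad_H(u)\setminus\{v\}$ throughout the loop, so when the loop variable $i$ reaches $|S^*|$ the set $S^*$ is an admissible candidate in line~8; by faithfulness the test $u\perp\!\!\!\perp_p v\mid S^*$ succeeds and the edge $u-v$ is removed. Combining the two directions shows that $H$ ends with exactly the adjacencies of $G$.

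The main obstacle is justifying the existence of the canonical separator $S^*\subseteq ad_G(u)\setminus\{v\}$ for non-adjacent complex-spouse pairs. The naive candidate $\textbf{Mb}(u)\setminus\{v\}$ does not work because it contains the child $w_1$ in the minimal complex $u\to w_1-\cdots-w_r\gets v$, and conditioning on $w_1$ activates that collider section. The cleanest route is to pass to the moral graph of the smallest ancestral set $An(\{u,v\})$: since $u$ and $v$ are non-adjacent they are separated there by the boundary of $u$ in $(G_{An(\{u,v\})})^m$, and this boundary is a subset of $bd(u)\cup ch(u)\subseteq ad_G(u)$. This is the only nontrivial structural fact about LWF CGs that the proof relies on; everything else is the standard PC-style monotonicity argument.
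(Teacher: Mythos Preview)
Your overall strategy is the same PC-style argument the paper uses: true adjacencies survive by faithfulness, and each spurious edge is removed once the conditioning-set size reaches that of a canonical separator. The paper's version of the second half is phrased differently: it invokes an auxiliary minimal-separation routine that outputs a set $Z\subseteq ne(A)$ in $(G_{An(A\cup B)})^m$ and then asserts $Z\subseteq ad_H(A)\setminus B$; it does not attempt to place the separator inside $ad_G(A)$.

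There is a real gap in your last paragraph. The assertion that ``the boundary of $u$ in $(G_{An(\{u,v\})})^m$ is a subset of $bd(u)\cup ch(u)\subseteq ad_G(u)$'' is false in general: moralization also adds an edge between $u$ and every complex-spouse of $u$ \emph{within} $G_{An(\{u,v\})}$, and such a node need not be adjacent to $u$ in $G$. Concretely, take $G$ with edges $u\to a$, $v\to a$, $u\to w$, $c\to w$, $w-v$. Then $u,v$ are non-adjacent complex-spouses, $An(\{u,v\})=\{u,v,w,c\}$ (note $a\notin An(\{u,v\})$), and the restricted graph still contains the complex $u\to w\gets c$; hence the moral graph carries the edge $u-c$ with $c\notin ad_G(u)$.

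Your conclusion---that a separator lives in $ad_G(u)\setminus\{v\}$ or $ad_G(v)\setminus\{u\}$---is nevertheless correct, but the justification must pick the right endpoint. Since $u\in an(v)$ and $v\in an(u)$ cannot both hold, assume $v\notin an(u)$; then any child of $v$ lying in $An(\{u,v\})$ would yield a partially directed path from $v$ to $u$ or a partially directed cycle, so $v$ has no children in $An(\{u,v\})$. Consequently no complex in $G_{An(\{u,v\})}$ has $v$ as an arrow-source, and the moral-graph neighbours of $v$ there are exactly $ad_G(v)\cap An(\{u,v\})\subseteq ad_G(v)$. With this correction your invariant $ad_G(v)\subseteq ad_H(v)$ makes the ``$S^*$ is still available when $i=|S^*|$'' step immediate---arguably cleaner than the paper's formulation, which works with the moving target $ad_H$ and leaves that monotonicity point implicit.
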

\begin{proof}
Consider any pair of nodes $A$ and $B$
in $G$. If $A\in ad_G(B)$, then $A\not\perp\!\!\!\perp B|S$ for all $S\subseteq V\setminus (A\cup B)$ by the faithfulness assumption. Consequently, $A\in ad_H(B)$ at all times. On the other hand, if $A\not\in ad_G(B)$ (equivalently $B\not\in ad_G(A)$), Algorithm \ref{alg2sepLWF} \citep{jv-pgm18} returns a set $Z\subseteq ad_H(A)\setminus B$ (or $Z\subseteq ad_H(B)\setminus A$) such that $A\perp\!\!\!\perp_p B|Z$.  This means there exist $0\le i\le |V_H|-2$ such that the edge $A-B$ is removed from $H$ in line 10. Consequently, $A\not\in ad_H(B)$ after line 13.
\begin{algorithm2e}[ht]
\caption{Minimal separation}\label{alg2sepLWF}
	\SetAlgoLined
	\KwIn{Two non-adjacent nodes $A, B$ in the LWF chain graph $G$.}
	\KwOut{Set $Z$, that is a minimal separator for $A, B$.}
	Construct $G_{An(A\cup B)}$\;
	Construct $(G_{An(A\cup B)})^m$\;
	Set $Z'$ to be $ne(A)$ (or $ne(B)$) in $(G_{An(A\cup B)})^m$\;
	\tcc{$Z'$ is a separator because, according to the local Markov property of an undirected graph, a vertex is conditionally independent of all other vertices in the graph, given its neighbors \citep{l}.}
	Starting from $A$, run BFS. Whenever a node in $Z'$ is met, mark it if it is not already marked, and do not continue along
	that path. When BFS stops, let $Z''$ be the set of nodes which are marked. Remove all markings\;
	Starting from $B$, run BFS. Whenever a node in $Z''$ is met, mark it if it is not already marked, and do not continue along
	that path. When BFS stops, let  $Z$ be the set of nodes which are marked\;
	\Return{$Z$}\;
\end{algorithm2e}
 
\end{proof}

\begin{lemma}\label{lem2lwfpc}
$G$ and $H^*$ have the same minimal complexes and adjacencies after line 22 of Algorithm \ref{alg:LWFMb}.
\end{lemma}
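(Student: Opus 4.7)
The plan is to prove the lemma in two parts: (a) $G$ and $H^*$ have the same adjacencies after line 22, and (b) $G$ and $H^*$ have the same minimal complexes. For (a), I observe that Phase 3 (lines 14--21) only reorients edges without modifying the skeleton, and taking the pattern in line 22 likewise only relabels arrows as undirected or directed without removing any edge. Combined with Lemma \ref{lem1lwfpc}, which establishes that $G$ and $H$ share adjacencies at the start of Phase 3, this immediately yields that $G$ and $H^*$ share adjacencies throughout.

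For the forward direction of (b) -- every minimal complex of $G$ is recovered in $H^*$ -- I would fix a minimal complex $a \to v_1 - \cdots - v_r \gets b$ of $G$. Then $a,b$ are non-adjacent in $G$, hence (by part (a)) in $H$, so Phase 2 produced a c-separator $S_{ab}$. Analyze the route $a \to v_1 - \cdots - v_r \gets b$: its unique collider section is $\{v_1,\ldots,v_r\}$ and its non-collider sections are the singletons $\{a\},\{b\}$. Because this route is blocked by $S_{ab}$ but the non-collider sections trivially miss $S_{ab}$, we must have $\{v_1,\ldots,v_r\} \cap (S_{ab} \cup an(S_{ab})) = \emptyset$. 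Adding $v_1$ to the conditioning set places a node of the collider section inside $S_{ab}\cup\{v_1\}$ without contaminating the non-collider sections, making the route active; by faithfulness, $a \not\perp\!\!\!\perp_p b \mid S_{ab}\cup\{v_1\}$, so line 17 orients $a - v_1$ as $a \to v_1$. A symmetric argument at the other end yields $b \to v_r$. Minimality of the complex (the intermediate $v_i$ are not adjacent to $a$ or $b$, and the undirected path $v_1-\cdots-v_r$ has no chord) is exactly what makes the labelling procedure described after Phase 3 preserve these two arrows when line 22 takes the pattern.

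For the backward direction of (b) -- every labelled arrow in $H^*$ arises from a minimal complex of $G$ -- I would begin from an arrow $u \to w$ produced in line 17. By construction there exists $v$ with $u\not\in ad_H(v)$, $u \perp\!\!\!\perp_p v \mid S_{uv}$, and $u \not\perp\!\!\!\perp_p v \mid S_{uv}\cup\{w\}$. Faithfulness converts this to a c-separation statement: $u$ and $v$ are c-separated by $S_{uv}$ but adding $w$ to the conditioning set activates some route $\omega$ between them in $G$. A section-level analysis of this transition shows that $w$ must lie on a collider section of $\omega$. Since $u$ is an endpoint of $\omega$ and is adjacent to $w$, the first section of $\omega$ must be the collider section containing $w$, and the only way this section can be entered by an arrow on its $u$-end is if $u \to w$ in $G$. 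The pattern-labelling step then pairs $u\to w$ with a second oriented arrow $u'\to w'$ connected to it by an undirected chordless path in $H^*$ having no intermediate adjacency to $u$ or $u'$; by the same analysis $u'\to w'$ in $G$, and by part (a) the undirected path is present in $G$ as well, so $u\to w-\cdots-w'\gets u'$ is a minimal complex of $G$.

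The main obstacle is the backward direction, specifically the careful case analysis in the implication ``$u \perp\!\!\!\perp_p v \mid S_{uv}$ and $u \not\perp\!\!\!\perp_p v \mid S_{uv}\cup\{w\}$ force $u\to w$ in $G$.'' The subtlety is that under c-separation a node can unblock a collider section not only by being inside it but also by making previously non-ancestral collider nodes become ancestral via $an(S_{uv}\cup\{w\})$, so I need to rule out the latter configurations at the $u$-end of the route. I plan to follow the section-based argument from the LCD correctness proof of \citet{mxg}, adapting it to our notation so that all edge cases are dispatched uniformly.
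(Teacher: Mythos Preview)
Your approach is broadly compatible with the paper's, but the two diverge in emphasis and in what is actually established.

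For the forward direction, the paper splits into two cases (the $v$-structure $u\to w\gets v$ and the longer complex $u\to w-\cdots-z\gets v$) and, crucially, also argues that the algorithm will \emph{not} orient the edge the wrong way (i.e., as $w\to u$), by checking that the condition in line~17 fails for the reversed triple. Your unified $c$-separation argument for orienting $a\to v_1$ and $b\to v_r$ is cleaner, but you do not explicitly rule out the reverse orientation; you implicitly rely on your backward direction to cover this, which puts extra weight on the part of your argument that is weakest.

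For the backward direction, the paper does \emph{not} attempt what you attempt. It does not claim that every arrow produced in line~17 satisfies $u\to w$ in $G$; instead it exhibits an example (Figure~\ref{patternlwfpc}) in which a non-complex arrow $B\to C$ is produced in line~18 and then shows that the pattern-extraction step (line~22) removes it because it cannot be paired with another candidate arrow along a suitable undirected path. In other words, the paper's argument for the backward direction is essentially ``line~22 cleans up the false positives,'' illustrated by example and relying on the pattern procedure of \citet{mxg}. Your plan to prove the stronger intermediate claim---that any line-17 orientation already matches $G$---is more ambitious, and the step ``since $u$ is an endpoint of $\omega$ and is adjacent to $w$, the first section of $\omega$ must be the collider section containing $w$'' is not justified as written: the activated route $\omega$ need not pass through $w$ immediately after $u$, and (as you note) the unblocking could occur via $an(w)$ rather than $w$ itself. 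You are right that the LCD correctness argument in \citet{mxg} is the place to look, but be aware that the paper itself does not carry out this analysis and instead leans entirely on line~22.
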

\begin{proof}
$G$ and $H^*$ have the same adjacencies by Lemma \ref{lem1lwfpc}. Now we show that any arrow that belongs to a minimal complex in $G$ is correctly oriented in line 18 of Algorithm \ref{alg:LWFMb}, in the sense that it is an arrow
with the same orientation in $G$. For this purpose, consider the following two cases:

\textbf{Case 1:} $u\to w\gets v$ is an induced subgraph in $G$. So, $u, v$ are not adjacent in $H$ (by Lemma \ref{lem1lwfpc}), $u-w\in H^*$ (by Lemma \ref{lem1lwfpc}), and $u\not\perp\!\!\!\perp_p v|(S_{uv}\cup \{w\})$ by the faithfulness assumption. So, $u - w$ is oriented as $u\to w$ in $H^*$ in line 15. Obviously, we will not orient it as $w\to u$.

\textbf{Case 2:} $u\to w-\cdots-z\gets v$, where $w\ne z$ is a minimal complex in $G$. So, $u, v$ are not adjacent in $H$ (by Lemma \ref{lem1lwfpc}), $u-w\in H^*$ (by Lemma \ref{lem1lwfpc}), and $u\not\perp\!\!\!\perp_p v|(S_{uv}\cup \{w\})$ by the faithfulness assumption. So, $u - w$ is oriented as $u\to w$ in $H^*$ in line 15. Since $u\in S_{vw}$ and $w\perp\!\!\!\perp_p v|(S_{wv}\cup \{u\})$ by the faithfulness assumption  so $u,v$, and $w$ do not satisfy the conditions and hence we will not orient $u - w$ as $w\to u$.

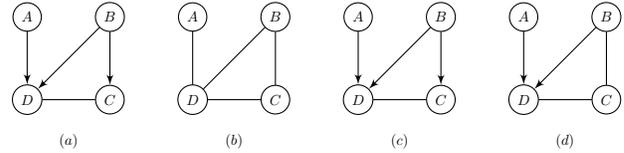
\begin{figure}[ht]
    \centering
	\[\begin{tikzpicture}[scale=.55, transform shape]
	\tikzset{vertex/.style = {shape=circle,draw,minimum size=1.5em}}
	\tikzset{edge/.style = {->,> = latex'}}
	\node[vertex] (o) at  (5,-2) {$B$};
	\node[vertex] (p) at  (3,-4) {$D$};
	\node[vertex] (q) at  (3,-2) {$A$};
	\node[vertex] (r) at  (5,-4) {$C$};
	\node (s) at (4, -5) {$(a)$};
	\draw[edge] (q) to (p);
	\draw[edge] (o) to (r);
	\draw[edge] (o) to (p);
	\draw (p) to (r);
	
	\node[vertex] (u) at  (9,-2) {$B$};
	\node[vertex] (v) at  (7,-4) {$D$};
	\node[vertex] (x) at  (7,-2) {$A$};
	\node[vertex] (w) at  (9,-4) {$C$};
	\node (y) at (8, -5) {$(b)$};
	\draw (x) to (v);
	\draw (u) to (v);
	\draw (u) to (w);
	\draw (v) to (w);
	
	\node[vertex] (a) at  (13,-2) {$B$};
	\node[vertex] (b) at  (11,-4) {$D$};
	\node[vertex] (c) at  (11,-2) {$A$};
	\node[vertex] (d) at  (13,-4) {$C$};
	\node (e) at (12, -5) {$(c)$};
	\draw[edge] (a) to (d);
	\draw[edge] (a) to (b);
	\draw[edge] (c) to (b);
	\draw (b) to (d);
	
	\node[vertex] (f) at  (17,-2) {$B$};
	\node[vertex] (g) at  (15,-4) {$D$};
	\node[vertex] (h) at  (15,-2) {$A$};
	\node[vertex] (i) at  (17,-4) {$C$};
	\node (j) at (16, -5) {$(d)$};
	\draw (f) to (i);
	\draw[edge] (h) to (g);
	\draw[edge] (f) to (g);
	\draw (g) to (i);
	
	\end{tikzpicture}\]
    \caption{(a) The LWF CG $G$, (b) the skeleton of $G$, (c) $H^*$ before executing the line 22 in Algorithm \ref{alg:LWFMb}, and (d) $H^*$ after executing the line 22 in Algorithm \ref{alg:LWFMb}. 
}
    \label{patternlwfpc}
\end{figure}

Consider the chain graph $G$ in Figure \ref{patternlwfpc}(a). After applying the skeleton recovery of Algorithm \ref{alg:LWFMb}, we obtain $H$, the skeleton of $G$, in Figure \ref{patternlwfpc}(b). In the execution of the complex recovery of
Algorithm \ref{alg:LWFMb}, when we pick $A,B$ in line 15 and $C$ in line 16, we have $A\perp\!\!\!\perp B|\varnothing$, that is, $S_{AB}=\varnothing$, and
find that $A\not\perp\!\!\!\perp B|C$. Hence we orient $B-C$ as $B\to C$ in line 18, which is not a complex arrow in $G$.
Note that we do not orient $C-B$ as $C\to B$: the only chance we might do so is when $u = C, v = A$, and $w = B$ in the inner loop of the complex recovery of Algorithm \ref{alg:LWFMb}, but we have $B\in S_{AC}$ and the condition in line 17 is not
satisfied. Hence, the graph we obtain before the last step of complex recovery in Algorithm \ref{alg:LWFMb} must be the one given in
Figure \ref{patternlwfpc}(c), which differs from the recovered pattern in Figure \ref{patternlwfpc}(d). This illustrates the necessity of the last step of complex recovery in Algorithm \ref{alg:LWFMb}. To see how the edge $B\to C$ is removed in the last step of complex recovery in Algorithm \ref{alg:LWFMb}, we observe that, if we follow the procedure described in the comment after line 22 of Algorithm \ref{alg:LWFMb}, the only chance
that $B\to C$ becomes one of the candidate complex arrow pair is when it is considered together with $A\to D$. However, the only undirected path between $C$ and $D$ is simply $C-D$ with $D$ adjacent to $B$.
Hence $B\to C$ stays unlabeled and will finally get removed in the last step of complex recovery in Algorithm \ref{alg:LWFMb}.

Consequently, $G$ and $H^*$ have the same minimal complexes and adjacencies after line 22.
\end{proof}

\section*{Appendix B: More Experimental Results}\label{appendixB}
\textbf{Data Generation Procedure}
First we explain the way in which the random LWF CGs and random samples are generated.
Given a vertex set $V$, let $p = |V|$ and $N$ denote the average degree of edges (including undirected, pointing out, and pointing in) for each vertex. We generate a random LWF CG on $V$ as
follows:

(1) Order the $p$ vertices and initialize a $p\times p$ adjacency matrix $A$ with zeros;

(2) For each element in the lower triangle part of $A$, set it to be a random number generated from
a Bernoulli distribution with probability of occurrence $s = N/(p-1)$;

(3) Symmetrize $A$ according to its lower triangle;

(4) Select an integer $k$ randomly from $\{1,\dots,p\}$ as the number of chain components;

(5) Split the interval $[1, p]$ into $k$ equal-length subintervals $I_1,\dots,I_k$ so that the set of variables
falling into each subinterval $I_m$ forms a chain component $C_m$; 

(6) Set $A_{ij} = 0$ for any $(i, j)$ pair such that $i \in I_l, j \in I_m$ with $l > m$.

This procedure yields an adjacency matrix $A$ for a chain graph with $(A_{ij} = A_{ji} = 1)$ representing an undirected edge between $V_i$ and $V_j$ and $(A_{ij} =1,  A_{ji} =0)$ representing a directed edge
from $V_i$ to $V_j$. Moreover, it is not difficult to see that $\mathbb{E}[\textrm{vertex degree}] = N$, where an adjacent vertex can
be linked by either an undirected or a directed edge.

Given a randomly generated chain graph $G$ with ordered chain components $C_1,\dots,C_k$, we generate a Gaussian distribution on it via the $\mathsf{rnorm.cg}$ function from the \href{http://www2.uaem.mx/r-mirror/web/packages/lcd/lcd.pdf}{LCD} R package.

\textbf{Metrics for Evaluation}
We evaluate the performance of the proposed algorithms in terms of the six measurements that are commonly used~\citep{Colombo2014,mxg,Tsamardinos2006} for constraint-based learning algorithms: (a) the true positive
rate (TPR) (also known as sensitivity, recall, and hit rate), (b) the false positive rate (FPR) (also known as fall-out), (c) the true discovery rate (TDR) (also known as precision or positive predictive value), (d) accuracy (ACC) for the skeleton, (e) the structural Hamming distance (SHD) (this is the metric described in \cite{Tsamardinos2006} to  compare the
structure of the learned and the original graphs), and (f) run-time for the pattern recovery algorithms. In short, $TPR=\frac{\textrm{true positive } (TP)}{\textrm{the number of real positive cases in the data } (Pos)}$ is the ratio of  the number of correctly identified edges over total number of edges, $FPR=\frac{\textrm{false positive }(FP)}{\textrm{the number of real negative cases in the data }(Neg)}$ is the ratio of the number of incorrectly identified edges over total number of gaps, $TDR=\frac{\textrm{true positive } (TP)}{\textrm{the total number of edges in the recovered CG}}$ is the ratio of  the number of correctly identified edges over total number of edges (both in estimated graph), $ACC=\frac{\textrm{true positive }(TP) +\textrm{ true negative }(TN)}{Pos+Neg}$, and
$SHD$ is the number of legitimate operations needed to change the current resulting graph to the true CG,
where legitimate operations are: (a) add or delete an edge and (b) insert, delete or reverse an edge
orientation. In principle, a large TPR, TDR, and ACC, a small FPR and SHD indicate good performance.
\begin{figure}[h]
	\centering
	\includegraphics[width=.75\linewidth]{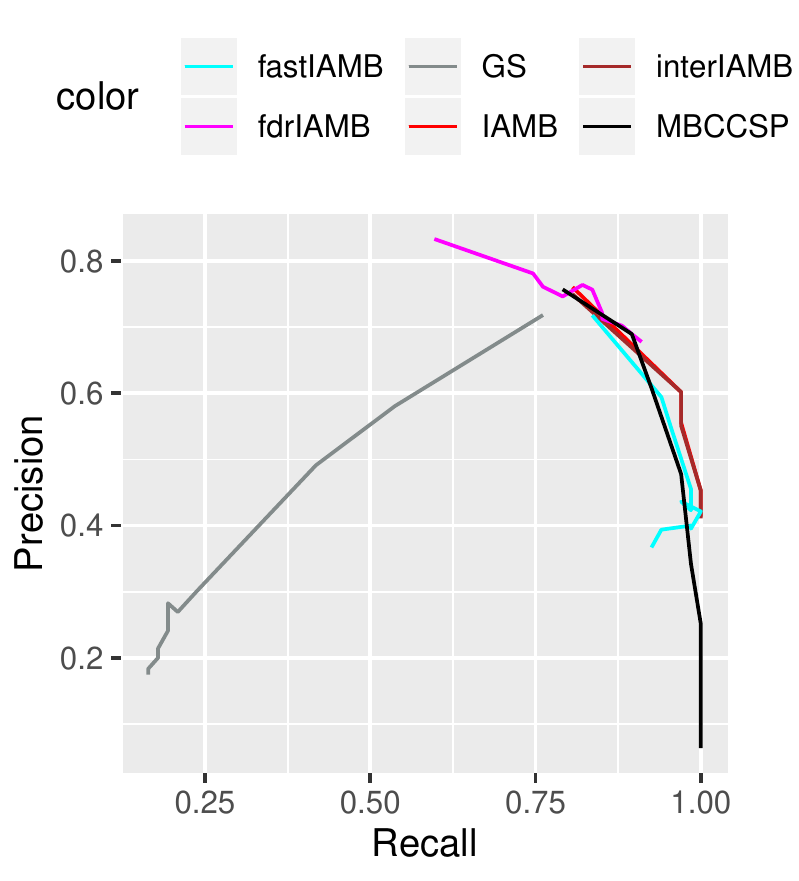}
	\includegraphics[width=.75\linewidth]{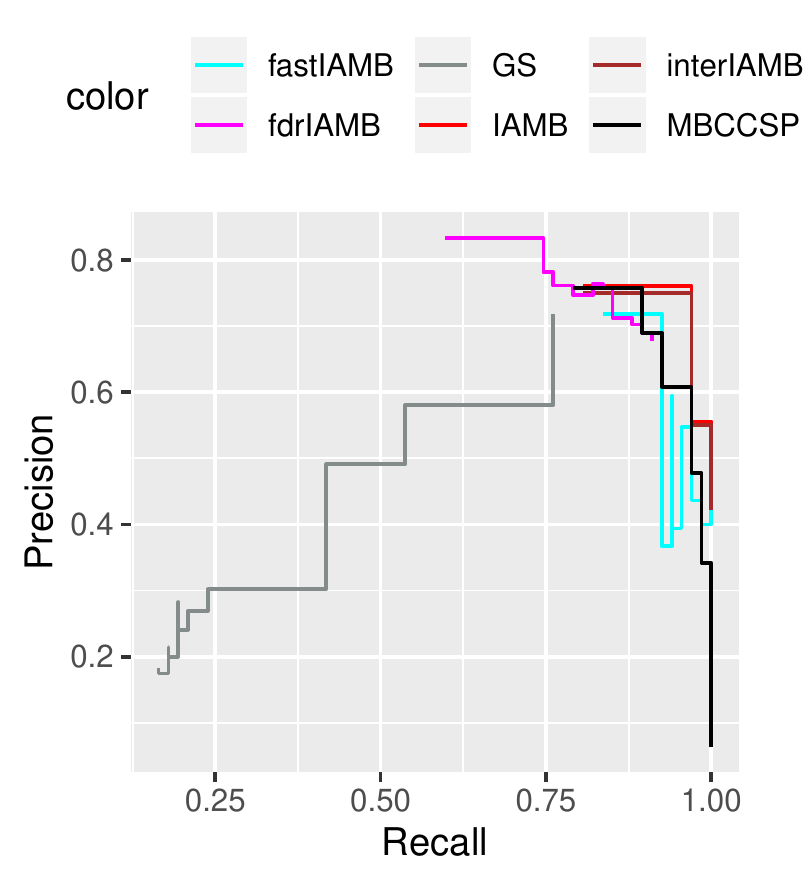}
	\caption{Precision-Recall ROC curves (pathwise and stepwise) along several different alpha values ($\alpha\in(0.005,0.05,0.1,0.2,0.3,0.4,0.5,0.6,0.7,0.8,0.9)$) for a randomly generated Gaussian CG model with 50 variables  and N = 3; these curves show the precision-recall trade-off.}
	\label{fig:PRcurve}
\end{figure}

\begin{figure}
	\centering
	\includegraphics[scale=.475,page=1]{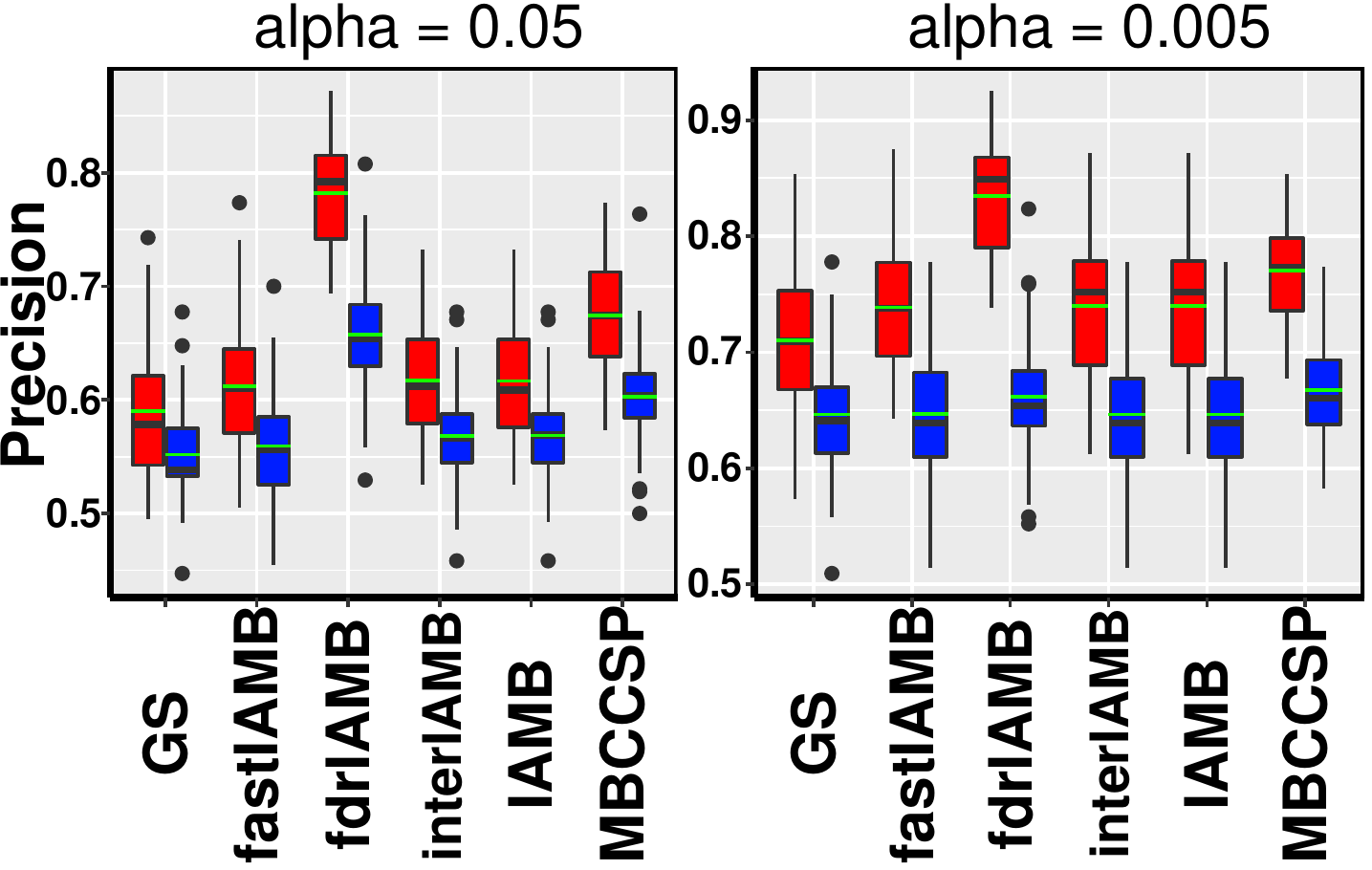}
	\includegraphics[scale=.475,page=2]{images/Mbsboxplots502.pdf}
	\includegraphics[scale=.475,page=3]{images/Mbsboxplots502.pdf}
	\includegraphics[scale=.475,page=4]{images/Mbsboxplots502.pdf}
	\includegraphics[scale=.475,page=5]{images/Mbsboxplots502.pdf}
	\caption{Performance of Markov blanket recovery algorithms for randomly generated Gaussian chain graph models:
		 over 30 repetitions with 50 variables  correspond to N = 2. The green line in a box indicates the mean of that group.}
	\label{fig:502Mbs}
\end{figure}

\begin{figure}
	\centering
	\includegraphics[scale=.475,page=1]{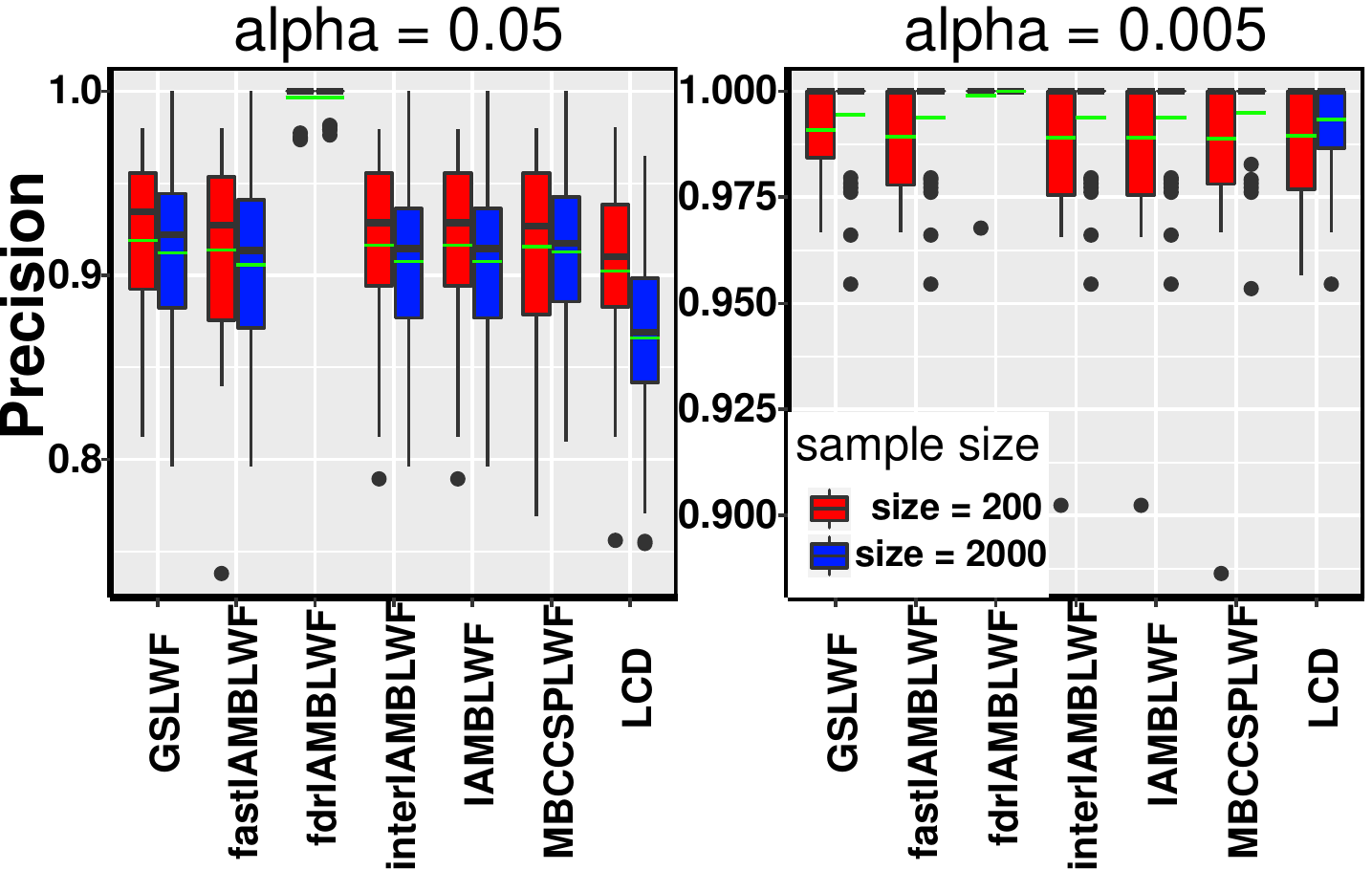}
	\includegraphics[scale=.475,page=2]{images/MBLWFCG502.pdf}
	\includegraphics[scale=.475,page=3]{images/MBLWFCG502.pdf}
	\includegraphics[scale=.475,page=4]{images/MBLWFCG502.pdf}
	\includegraphics[scale=.475,page=5]{images/MBLWFCG502.pdf}
	\caption{Performance of LCD and MbLWF algorithms for randomly generated Gaussian chain graph models:
		 over 30 repetitions with 50 variables  correspond to N = 2. The green line in a box indicates the mean of that group.}
	\label{fig:502cglearn}
\end{figure}

\begin{figure}
	\centering
	\includegraphics[scale=.475,page=1]{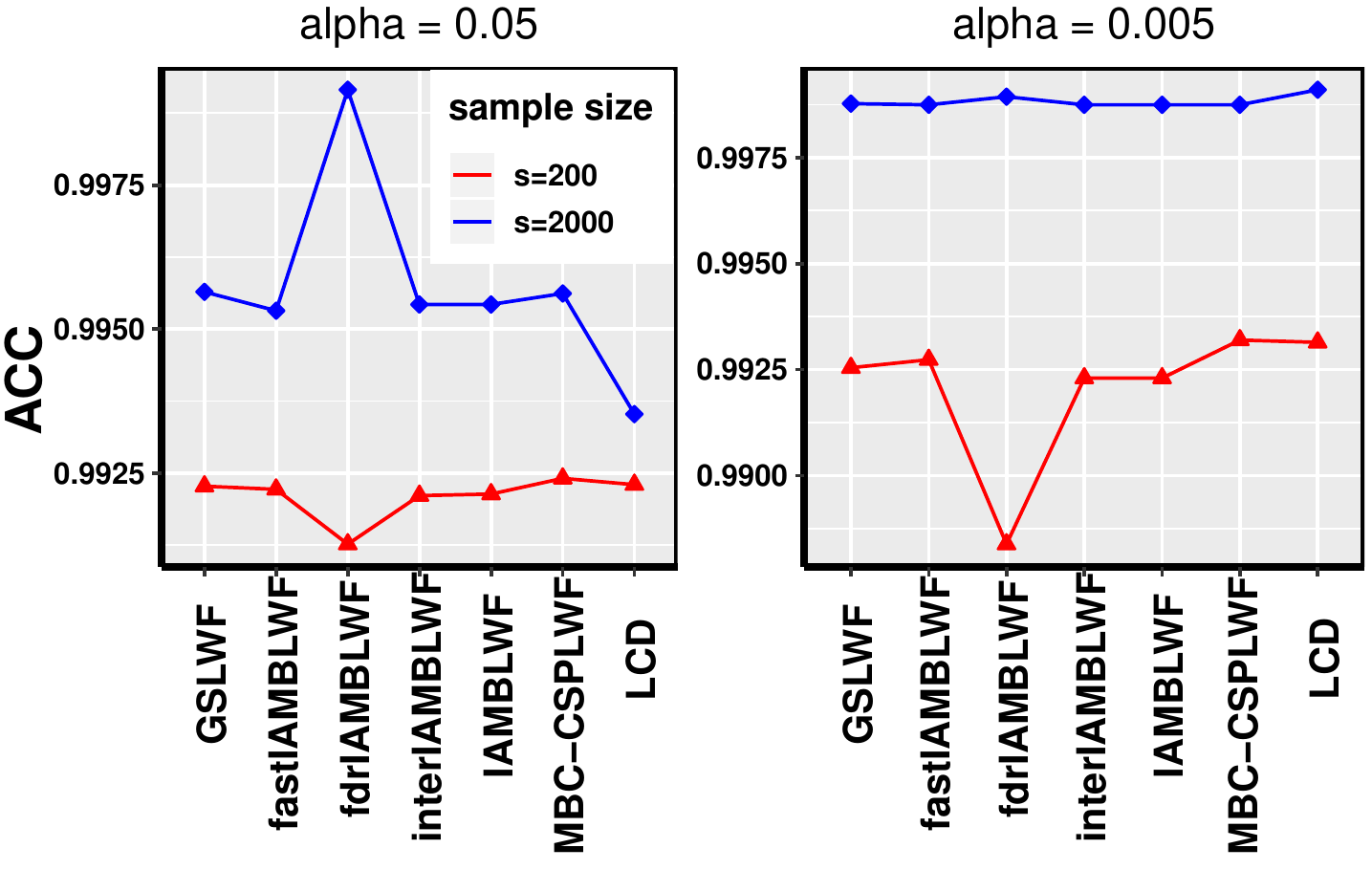}
	\includegraphics[scale=.475,page=2]{images/plots502uai2020.pdf}
	\includegraphics[scale=.475,page=3]{images/plots502uai2020.pdf}
	\includegraphics[scale=.475,page=4]{images/plots502uai2020.pdf}
	\includegraphics[scale=.475,page=5]{images/plots502uai2020.pdf}
	\caption{Performance of LCD and MbLWF algorithms for randomly generated Gaussian chain graph models:
		average over 30 repetitions with 50 variables  correspond to N = 2.}
	\label{fig:502}
\end{figure}

\begin{figure}
	\centering
	\includegraphics[scale=.475,page=1]{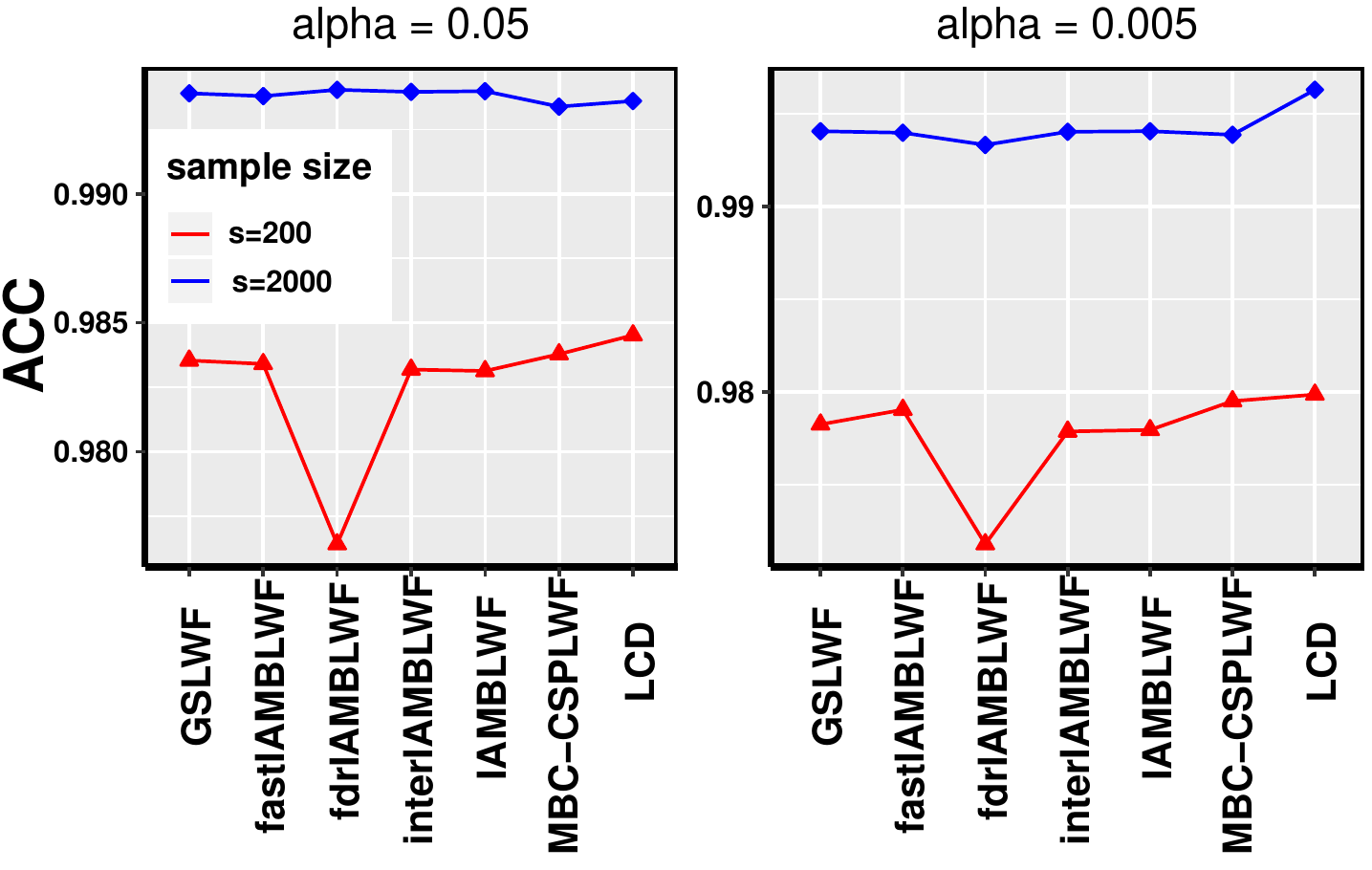}
	\includegraphics[scale=.475,page=2]{images/plots503uai2020.pdf}
	\includegraphics[scale=.475,page=3]{images/plots503uai2020.pdf}
	\includegraphics[scale=.475,page=4]{images/plots503uai2020.pdf}
	\includegraphics[scale=.475,page=5]{images/plots503uai2020.pdf}
	\caption{Performance of LCD and MbLWF algorithms for randomly generated Gaussian chain graph models:
		average over 30 repetitions with 50 variables  correspond to N = 3.}
	\label{fig:503}
\end{figure}

\end{document}